\newcommand{\lan}{{\langle}}
\newcommand{\ran}{{\rangle}}
\newtheorem{ob}{Observer Abstraction}
\newtheorem{ax}{Axiom}
\newtheorem{df}{Definition}
\newtheorem{lemma}{Lemma}
\newtheorem{claim}{Claim}
\begin{document}


\title{Towards a Framework for Observing Artificial Evolutionary Systems}
\author{Janardan Misra \\ HTS Research, Bangalore, India 560076 \\ {\sf Email:
janardan.misra@honeywell.com}}
\date{}
\maketitle
\tableofcontents

\begin{abstract}
Establishing the emergence of evolutionary behavior as a defining
characteristic of `life' is a major step in the Artificial life
(ALife) studies. We present here an abstract formal framework for
this aim based upon the notion of high-level observations made on
the ALife model at hand during its simulations. An observation
process is defined as a computable transformation from the
underlying dynamic structure of the model universe to a tuple
consisting of abstract components needed to establish the
evolutionary processes in the model. Starting with defining
entities and their evolutionary relationships observed during the
simulations of the model, the framework prescribes a series of
definitions, followed by the axioms (conditions) that must be met
in order to establish the level of evolutionary behavior in the
model. The examples of Cellular Automata based Langton Loops and
$\lambda$ calculus based Algorithmic Chemistry are used to
illustrate the framework. Generic design suggestions for the ALife
research are also drawn based upon the framework design and case
study analysis.
\end{abstract}

{\sf Keywords: Artificial Life, Evolution, Observations, Formal
Framework, Evolutionary Processes. }
\newpage
\section{Background}

The phenomenon of ``life" on earth is one of the most intriguing
one with vast variety and complexity of forms in which it is found
on multiple levels ranging from microbiological scale to higher
taxa exhibiting a wide array of characteristics. Although we can
explain several aspects of life around us in the light of existing
theories for real-life evolution, we do not yet have a
comprehensive understanding of the principles underlying the
emergence of life and the conditions that led to the diversity and
complexity of life on earth \cite{Fut98}. Experimental methods to
understand biological processes are usually difficult and error
prone because living systems are by nature complex in design and
usually hard to manipulate. Evolution is even more difficult to
study experimentally since experiments may span over several
generations and are usually difficult to control.

\emph{Artificial life} (ALife) is an elegant methodology to
complement real life theories to study the principles underlying
the complex phenomena of life without directly working with the
real-life organisms.
For example, ALife studies can complement theoretical biology by
uncovering detailed dynamics of evolution where real life
experiments are not possible, and by developing generalized formal
models for life to determine criterions so that life in any
arbitrary model can be observed.

Cellular Automata based models are one of the earliest attempts of
synthesis to understand the underlying logic of self reproduction
\cite{Sipper98}. Later attempts in the field have considered
several new kinds of synthetic structures including programs,
$\lambda$ terms, strings, graphs, automata's etc (see for
overview~\cite{ac:Dittrich01}) and have demonstrated that one or
the other observable properties of real-life are shared by all of
these models, though the parallel diversity and robust evolving
structures which we find in real-life are yet to be designed. One
of the guiding principles of ALife research behind these novel
class of synthetic structures is that - ``life is a property of
form and organization rather than the matter used to build it"
\cite{Langton95}. This criterion to identify life in these novel
synthetic structures in turn poses further questions as to which
kind of organizational structures possess life? Which properties
should we be looking at in those structures? and most importantly
how can we recognize life in any arbitrary model?

To partly address these questions, in this paper, \emph{we proceed
with the hypothesis that one of the possible ways life can be
recognized in an arbitrary ALife model during its simulations is
by observing population of entities undergoing evolution in the
sprit of evolution by natural selection, which demands the
presence of reproduction, heredity, variation owing to mutational
changes, and finally natural selection based reproductive success.
(See also~\cite{Dawkins82}).} Though the criterion to equate life
with the presence of evolutionary processes excludes other
plausible properties including metabolism~\cite{ac:BFF92},
complexity~\cite{ac:adami00}, self organization~\cite{ac:Kau93},
autonomy and autopoiesis~\cite{Zeleny81}, yet captures a wide
class of interesting phenomena related to population level
evolution of entities~\cite{SS00}. Such a identification of
population level evolutionary phenomena in arbitrary ALife models
critically depends upon the observations carried out over
simulations as we discuss next.

\subsection{Motivations}

Observations play a fundamental role both in real life studies as
well as in ALife research. In case of real-life studies the role
of observations is usually limited to an experimental analysis to
uncover the specific dynamics underlying the observed life forms
and their properties using natural observations or controlled
experiments. On the other hand, in case of ALife studies, in
general there is no known method to decide beforehand the kind of
entities, which might demonstrate non-trivial life-like behavior,
without closely observing the simulations of the model.

The very identification of life is thus an existential problem for
ALife studies and we need some sound formal framework to address
this problem. In absence of a formal framework, we often encounter
intuitive and informal arguments, which remain useful only to
specific models and do not always have the generic perspective. We
question whether these model-specific arguments are sufficient to
support the presence of an extremely complex phenomenon such as
evolution in ALife models. Without formal foundations to ascertain
these (informally presented) claims, there is always a danger to
run into conflicting arguments, which might, for example, be based
upon observations of the smulations on different levels. Nehaniv
and Dautenhahn~\cite{ND98} specifically discuss that
identification of time varying entities is a deep rooted problem
in the context of formal definitions for self-reproduction and add
that in absence of observers it is problematic to decide whether
an instance of artificial self-replication be treated at all a
life-like one.

In an attempt to provide a formal platform for observations in
ALife studies, a high level abstraction mechanism is presented for
characterizing the observations needed to establish the
evolutionary behavior in ALife studies. Initial concepts in this
direction appeared in~\cite{Misra06a,Misra07}. The central concept
of the framework is the formalization of the observation process,
which we believe is essential, but most often remains implicit in
ALife studies. The observation process leads to abstractions on
the model universe, which are consequently used for establishing
the necessary elements and the level of evolution in that model.
Examples of Cellular Automata based Langton Loops
(Section~\ref{chap:langton}) and $\lambda$ calculus based
Algorithmic Chemistry (Section~\ref{chap:lambda}) are used to
demonstrate the applicability of the formalism. Importantly the
framework does not build upon the low-level dynamics or the
``physical laws" of the underlying universe of the particular
ALife model at hand, and thus permits the study of higher-level
observationally ``emergent" phenomena as a basis of evolution.

\subsection{Contributions}

The paper brings the implicitly assumed notion of
\emph{observations to be carried out independent of the underlying
structure of the model} into main focus of ALife studies. It was
not clear before that observational processes can be independently
studied in their own right and the work presented in this paper
makes it clear by placing observations into distinct formal
platform. The work can also be seen as an attempt to fulfill the
need for explicitly separating the design of the ALife models from
the abstractions used to describe their dynamic progression.


The approach has helped us to formalize certain aspects of life
including recognition of reproductive relationships under parental
mutations as well as reproductive mutations in children along with
their epigenetic developments, which were believed to be difficult
to formalize before~\cite{ND98,Nehaniv05}. The formalism captures
wide range of reproductive instances including the case of multi
parent reproduction (without resorting to the concept of species),
and the case of reproduction without overall growth of the
population (cf.~\cite{ND98}). Finally framework design and
analysis of the case studies are used to draw useful design
suggestions for the ALife research so that interesting
evolutionary phenomena involving life-like entities can be better
synthesized and analyzed.\\


The paper is organized as follows: In
Section~\ref{chap:framework}, we will formally elaborate the
framework. Case studies will follow in
Section~\ref{sec:case-studies} -- Section~\ref{chap:langton}
applies the framework on cellular automata based Langton's Loops
and Section~\ref{chap:lambda} on $\lambda$ calculus based
Algorithmic chemistry. Section~\ref{chap:related} presents a
discussion of related work, and is followed by concluding remarks
in Section~\ref{sec:concluding}, along with the discussion on
design suggestions for the ALife researchers in
Section~\ref{sec:design-suggst}. Limitations of the framework and
pointers for further work are discussed in
Sections~\ref{sec:limit} and~\ref{sec:further-work} respectively.

\section{The Framework} \label{chap:framework}

In the ensuing discussion, we will use ``ALife model" and
``model", ``Observation process" and ``Observer" interchangeably
to add convenience in presentation. Similarly ``real-life" is used
in the paper to refer to organic life on earth in contrast with
the ``artificial-life". Also, \emph{Observer Abstractions} will
refer to specific observations and corresponding abstractions made
upon the ALife model during its simulations. \emph{Axioms} are
used to specify conditions which need to be satisfied in order to
infer various components of evolution. Thus for each fundamental
component of evolution: self reproduction, mutation, heredity, and
natural selection, framework specifies certain Axioms constraining
what is needed to be observed and consequently inferred in a
formal way if any claim towards presence of any of these
evolutionary components has to be substantiated. The aim is to
define these formal Axioms such that only valid claims for
evolutionary processes in a model can be entertained.
%
Auxiliary formal structures are used in the intermediate stages of
analysis. E.g., distance measure for determining dissimilarity
between entities for their specific characteristics (see
Section~\ref{sec:distance}).

\subsection{The Formal Structure of the Framework}
\label{chap:formalstruct}

To illustrate the framework, we will use a simple example of a
binary string based chemistry whenever required in the discussion
to assist the intuition behind the formalism. The chemistry will
be referred as $\mathbf{CBS}$ (Chemistry of Binary Strings).
Specifics of the design and structure of the chemistry will be
explained as we proceed.

\subsubsection{Observation Process and the Model Universe}
\label{sec:obproc}

We define the observation process as a transformation from the
underlying universe of the ALife model to a set of observed
abstractions as follows:\\\\
\textbf{Observation Process.} { \em $\Gamma \mapsto_{Obj} \Pi$: An
observation process $Obj$ is defined as a computable
transformation from the underlying model structure $\Gamma =
(\Sigma, \mathcal{T})$ to observer abstractions $\Pi = (E$, $F$,
$\Upsilon$, $D$, $\delta_{mut}$, $\delta_{rep\_mut}$, $C)$ and
represented as $\Gamma \mapsto_{Obj} \Pi$.
$\Gamma$, and $\Pi$ are defined below.}\\

The condition of computability is to ensure that the framework is
decidable (or feasible~\cite{Bedau99}), that is, the observation
process only involves feasible computable steps, which can also be
algorithmically programmed by the designer of the model and that
infeasible observations defined in terms of non verifiable claims
(e.g., `meta - information' based claims) can be avoided.\\\\
\textbf{States}. {\em $\Sigma$: set of observed states of the
model in a simulation.}\\

The exact definition of a ``state" would vary from one model to
the other due to their irreducible design differences as well as
the level at which observations are being made. A
multiset~\footnote{A \emph{multiset} $M$ on a set $E$ is a mapping
associating nonnegative integers (representing multiplicities)
with each element of $E$, $M: E \rightarrow \mathcal{N}$.
Informally a multiset may contain multiple copies of its
elements.} can sometimes be used to represent state of a model by
defining it as a collection of observable basic structures and
their corresponding multiplicities in the model at any instance
during its simulation. As an example, we can consider an observed
state in the case of our example chemistry of binary strings,
$\mathbf{CBS}$, as a multiset - such that some specific state
could be -
\[\{ (00101, 2), (10101, 1), (010, 1), (0100, 1) (10100, 1)\}\]

 Further illustrative examples can be seen in the case studies appearing in
Section~\ref{sec:case-studies}. \\\\
\textbf{Observed Run}. {\em $\mathcal{T}$: set of observed
sequences of states, ordered with respect to the temporal
progression of the model. Each such sequence represents one
\emph{observed run} of the model. A sequence of states is formally
represented as a mapping: $N \rightarrow \Sigma$, where $N$ is the
set of non negative integers acting as a set of indexes for the
states in the sequence.}\\

A temporally ordered state sequence is one of the basic building
blocks in the framework upon which all other observed abstractions
are made. Such a definition of a run of model implicitly implies
that the framework is fundamentally based upon the dynamic
simulations of the model and not upon static analytical
inferences. This is in accord with the notion of ``weak
emergence"~\cite{ac:BMPRAGIKR00}, which is a generic
characteristic of most of the ALife studies.

For a state $S$, $S-1$, and $S+1$ would denote the the states just
before and after $S$ in a state sequence.\\

$\Sigma\ \textit{and}\ \mathcal{T}$ thus define the underlying
dynamic structure of the model $\Gamma = (\Sigma, \mathcal{T})$.
Using $\Sigma\ \textit{and}\ \mathcal{T}$, sometimes, a state
machine model can also be used to define $\Gamma$.

\subsubsection{Entities and Their Characteristics}
\label{sec:observe}

\begin{ob}[\textbf{Entity Set}]
$E$: set of entities observed and uniquely identified by the
observer within a state and across the states of the model.
\end{ob}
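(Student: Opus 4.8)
Since the statement is an Observer Abstraction rather than a classical proposition, the natural reading is that one must establish that such an entity set $E$ is \emph{well-defined} and \emph{computably extractable} from the observed structure $\Gamma = (\Sigma, \mathcal{T})$, consistent with the feasibility requirement imposed on $\Gamma \mapsto_{Obj} \Pi$. The plan is to construct $E$ as the image of a computable identification map and to verify that this map respects the two identity requirements in the statement — uniqueness \emph{within} a state and coherence \emph{across} states — so that each element of $E$ names exactly one observed entity. First I would fix, for each state $S \in \Sigma$, a computable \emph{segmentation} procedure that partitions the observable content of $S$ into its constituent basic structures (for $\mathbf{CBS}$, the individual binary strings recorded in the multiset together with their multiplicities). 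This yields a finite collection of intra-state occurrences, and assigning each occurrence a distinct label realizes the ``uniquely identified within a state'' clause; well-definedness here is immediate, since segmentation of a single state is a local, terminating operation on a finite multiset.

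The harder half is identification \emph{across} states. Here I would define a computable matching relation between the occurrences of $S$ and those of its temporal neighbours $S-1$ and $S+1$, and then take the transitive closure of this relation along each observed run, which by definition is a mapping $N \rightarrow \Sigma$. An entity of $E$ is then identified with a maximal chain under this closure — that is, a trajectory of matched occurrences persisting through consecutive states. To legitimize this as an abstraction I would check three points: (i) the matching relation is computable, so that the overall transformation stays within the feasibility constraint; (ii) the closure is well-founded along any run, which follows from indexing states by $N$ so that chains cannot loop; and (iii) distinct maximal chains yield distinct elements of $E$, which delivers the ``uniquely identified across the states'' clause.

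The hard part will be exactly the cross-state coherence flagged in the motivation via~\cite{ND98}: there is no canonical, model-independent notion of when an occurrence in $S+1$ is ``the same'' entity as one in $S$, because entities may move, transform, split, or merge between states. Any matching relation therefore encodes a design choice, and I would have to argue that whatever relation the observer adopts is both computable and \emph{consistent} — in particular that it does not simultaneously bind one occurrence to two incompatible predecessors in a way that would make the maximal-chain partition ill-defined. I expect to handle this by requiring the matching relation to induce a partial matching (each occurrence matched to at most one predecessor and one successor per run), which forces the maximal chains to partition the occurrences cleanly and thereby makes $E$ a genuine set of distinctly identified entities. Since the remaining components of $\Pi$ — the relations $F$ and $\Upsilon$, the distance $D$, and the mutation maps $\delta_{mut}$ and $\delta_{rep\_mut}$ — all take $E$ as their domain, securing this clean partition is precisely the foundation the rest of the framework will rely on.
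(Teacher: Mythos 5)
There is a genuine mismatch here, and it is architectural rather than a local slip. The statement you are addressing is an \emph{Observer Abstraction} --- a definitional component of the framework --- and the paper offers no proof of it at all. What the paper does instead is (i) leave the criterion for selecting entities entirely to the observer, subject only to a consistency constraint (the same observation process must not yield different entity sets in two identical states), and (ii) offer ``tagging'' as an optional mechanism for distinguishing otherwise indistinguishable entities within a state, noting that \emph{if} tags happen to be invariant under time progression they can also be used to recognize persistence across states. Your proposal, by contrast, tries to \emph{derive} cross-state identity inside the definition of $E$ itself, by matching occurrences between neighbouring states, taking a transitive closure along a run, and quotienting into maximal chains. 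That construction collides with the framework's later machinery: in the paper, an entity is state-local ($F(e)$ is a single state per run, and Axiom 1 requires $\mathbf{R_{\delta_{mut}}}(e)=e'$ with $F(e')=F(e)+1$), and cross-state persistence under mutation is handled by the separate partial injective recognition relation $\mathbf{R_{\delta_{mut}}}$, bounded by $\delta_{mut}$ via the distance measure $D$. If $E$ consisted of your maximal chains, a single entity would span many states, $F(e')=F(e)+1$ would be ill-typed, and the deliberate distinction between recognition-with-mutation, causal descent $C$, and $\mathbf{AncestorOf}$ would be collapsed before it could be stated. In effect you have re-invented the recognition relation (your ``partial matching, at most one predecessor and one successor'' is exactly the paper's Axiom 2 injectivity plus functionality) and then placed it one layer too early.

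Two smaller points. First, ``uniquely identified within a state and across the states'' in the statement means that identifiers (tags) single out each entity unambiguously --- no two entities share an identifier --- not that entities are equivalence classes of occurrences persisting through time; the Nehaniv--Dautenhahn difficulty you correctly flag is precisely why the paper refuses to canonize any matching and instead axiomatizes what any observer-chosen recognition relation must satisfy. Second, $\delta_{mut}$ and $\delta_{rep\_mut}$ are not maps with domain $E$; they are fixed threshold vectors in $\mathbf{Diff}$ used to bound $D(e,e')$. Your emphasis on computability and on within-state segmentation does align with the paper's feasibility requirement and its $\mathbf{CBS}$ tagging example, but the central move of your proposal --- constructing $E$ as a chain quotient --- proves the wrong thing and would have to be unwound to make contact with Axioms 1--3 and the Causality Axiom as the paper states them.
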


The criterion to select the set of uniquely identifiable entities
in a given state of the ALife model is entirely dependent on the
observation process as specified by the ALife researcher. Thus for
the same set of simulations of a model, there may exist very
different observed states as well as entities. Nonetheless, same
observation process must not yield different sets of entities in
two identical states.

Defining sound criterion to identify entities often requires a
careful a attention since arbitrariness in defining entities might
well lead to the problem of false positives as discussed later
(see Section~\ref{false+ve}.)

``Tagging" can be sometimes used as a mechanism for the
identification of individual entities whenever there exist
multiple entities in the same state which are otherwise
indistinguishable. Thus an observer may associate and
correspondingly identify every entity in a state using a unique
tag. In cases, where tags are selected such that they remain
invariant under time progression of the model (i.e., do not change
owing to reactions or interactions of the entities), the tags can
as well be used for recognition of the persistence of these
entities across the states of the model.

For example, in case of $\mathbf{CBS}$, an observer might identify
individual strings as entities such that to distinguish
syntactically identical strings, we can associate with every
string an integer tag such that with tag $i$, an entity
corresponding to the binary string $s$ can be represented as
$[s]_i$. Thus a possible set of entities corresponding to the
example state given above becomes
$$\{ [00101]_1, [00101]_2, [10101]_3, [010]_4, [0100]_5,
[10100]_6\}$$ Alternately another observer may choose to define
entities as a tuples consisting of strings with \emph{three
identical leftmost bits} - giving the set of entities for the same
state as $$\{ [\underline{001}01, \underline{001}01]_1,
[\underline{101}01, \underline{101}00]_2, [\underline{010},
\underline{010}0]_3\}$$

\begin{ob}[\textbf{State Function}]
$F \subseteq E\ \times \Sigma$ returns the state(s) in the state
sequences in which a particular entity is observed. For a specific
state sequence $F$ can be treated as a function.
\end{ob}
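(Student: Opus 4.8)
The plan is to read the closing assertion as a well-definedness claim and to verify it directly from the way entities are identified, once the relation $F \subseteq E \times \Sigma$ is restricted to a single observed run. First I would fix a state sequence $\sigma \in \mathcal{T}$, recalling that formally $\sigma$ is a map $N \to \Sigma$, and restrict $F$ to the states occurring in this run by setting $F_\sigma = F \cap (E \times \mathrm{range}(\sigma))$. The goal is then to exhibit $F_\sigma$ as an honest function, i.e.\ to show that each entity is assigned a uniquely determined image.

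Next I would take the candidate function to be $F_\sigma : E \to 2^{\Sigma}$ given by $F_\sigma(e) = \{\, S \in \mathrm{range}(\sigma) : (e,S) \in F \,\}$ and argue that this assignment is well defined. The key input is the \textbf{Entity Set} abstraction: entities are \emph{uniquely identified} by the observer both within a state and across states, together with the stated determinism of the observation process, namely that the same process cannot yield different entity sets on two identical states. Consequently, for any fixed $e$ the membership test $(e,S) \in F$ has a determinate answer for every $S$ in the run, so the right-hand side above is a single unambiguous subset of $\Sigma$, which is exactly what it means for $F_\sigma$ to be a function on $E$.

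The main obstacle, and the reason the restriction to a \emph{specific} state sequence is needed, is that a persistent entity may be observed in several states of the run (the statement explicitly returns the ``state(s)''). Across the whole set $\mathcal{T}$ this multiplicity is compounded, since the same identifier may recur in distinct runs, so $F$ genuinely stays a one-to-many relation there; within one run the multiplicity is absorbed by letting the codomain be the power set $2^{\Sigma}$, so each entity still has exactly one image. I would also guard against a state value repeating at two indices of $\sigma$: because $\sigma$ is indexed by $N$, the cleanest route is to run the argument with $F_\sigma : E \to 2^{N}$ on indices and then use the determinism condition to show the two formulations coincide. If instead the intended reading is a single-valued $F_\sigma : E \to \Sigma$, the same argument specializes once we note that a non-persistent entity is tied to one state and that uniqueness of identification forbids two distinct states from both claiming it.
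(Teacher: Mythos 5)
There is a genuine mismatch here, on two levels. First, the paper offers no proof of this statement at all: it is an \emph{Observer Abstraction}, i.e.\ a definitional stipulation that the observer must supply (the paper's only gloss is that, e.g., in $\mathbf{CBS}$ the observer ``can maintain a table mapping entities to their corresponding states in order to define $F$''). The sentence ``for a specific state sequence $F$ can be treated as a function'' is a convention the framework adopts, not a consequence to be derived, so a well-definedness proof is answering a question the paper never poses. Second, and more importantly, the reading you actually prove is vacuous: \emph{every} relation $F \subseteq E \times \Sigma$ induces a set-valued map $e \mapsto \{S : (e,S) \in F\}$, so your appeals to unique identification of entities and to determinism of the observation process do no work --- the conclusion holds for arbitrary relations with no hypotheses at all. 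A proof whose premises can be deleted without damage is not establishing the intended content.

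The intended content is the single-valued reading, which your closing sentence mentions but does not ground. Your premise that ``a persistent entity may be observed in several states of the run'' is the opposite of the framework's convention: persistence is not modeled by $F$ being multi-valued but by the recognition relation $\mathbf{R_{\delta_{mut}}}$, which links \emph{distinct} entities in successive states. Axiom $1$ writes $F(e') = F(e) + 1$, which only typechecks if $F$ assigns each entity a single state (or index), and the Langton-loop instantiation defines $F: E \mapsto I$ as a table giving \emph{the} index of the state in which $e$ is observed --- singular. The plural ``state(s)'' in the statement refers to the several sequences in $\mathcal{T}$ (the same entity may be observed at different states in different runs), which is precisely why restriction to one sequence yields a function. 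So the fact you would need --- that each entity occurs in exactly one state per run --- is a design decision built into how entities are tagged and identified, and your one-line justification (``uniqueness of identification forbids two distinct states from both claiming it'') begs exactly this point: ``uniquely identified across the states'' could just as well be read as the same entity recurring in many states, unless one cites the $\mathbf{R_{\delta_{mut}}}$ convention for persistence, which your argument never does.
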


The state information provided by $F$ for entities will be used
later to define valid evolutionary relationships among them. In
general observers may use different mechanisms based upon the
nature of model as well as the entities defined, to determine the
state for a given entity. For example, as a simple mechanism, in
case of $\mathbf{CBS}$, the observer can maintain a table mapping
entities to their corresponding states in order to define
$F$.\\

Having defined the sequence of states with temporal ordering and
the entities identified by their tags, we will now proceed to
discuss how an observer might define the detailed observable
characteristics for such entities. Using these characteristics it
can draw descendent relationship, as well as can establish
presence of other components of evolution, e.g., heredity and
variation. To this aim, we will define `character space', as set
of values for the observed characteristics. These values might be
purely symbolic without any relative ordering or can be ordered
using suitable ordering relation.\\

\begin{ob}[\textbf{Character Space}]
The observer should define the set of all possible orthogonal and
{\sf measurable} characteristics for possible entities in the
model as a multi dimensional character space $\Upsilon =
\mathit{Char}_1 \times \mathit{Char}_2 \times \ldots \times
\mathit{Char}_n$, where each of $\mathit{Char}_i$ is the set of
values for $i^{th}$ characteristic. Each of $\mathit{Char}_i$ make
one dimension in the space $\Upsilon$. Each entity $e \in E$ is
thus a point in $\Upsilon$, say $e\ =\ (v_1, v_2, \ldots v_n)$,
where $v_i \in \mathit{Char}_i$.
\end{ob}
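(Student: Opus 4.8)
The plan is to treat this statement as the assertion that the characteristic assignment $e \mapsto (v_1, \ldots, v_n)$ is a well-defined total map $\chi : E \to \Upsilon$, and to verify the three properties such a map must enjoy: totality, single-valuedness, and landing inside the product $\Upsilon$. First I would unpack the \emph{measurability} hypothesis: for each dimension $\mathit{Char}_i$ the word ``measurable'' should be read as the guarantee that the observer possesses a computable procedure $\mu_i$ which, applied to any $e \in E$, halts and returns a value $v_i = \mu_i(e) \in \mathit{Char}_i$. Because $E$ itself is produced by the computable observation process $\Gamma \mapsto_{Obj} \Pi$, each $\mu_i$ is defined on precisely the objects that process emits, so each $\mu_i$ is a total function $E \to \mathit{Char}_i$. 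This establishes totality one coordinate at a time.

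Next I would assemble the coordinates into the product. Given totality of every $\mu_i$, the tuple $(\mu_1(e), \ldots, \mu_n(e))$ exists for each $e$, and by construction its $i$-th entry lies in $\mathit{Char}_i$; hence the tuple is a member of $\mathit{Char}_1 \times \cdots \times \mathit{Char}_n = \Upsilon$. This is exactly the content of the claim that $e$ ``is a point in $\Upsilon$.'' Single-valuedness I would inherit from the determinism already imposed earlier in the excerpt: the requirement that the same observation process must not yield different sets of entities in two identical states has a direct analogue for characteristics, namely that $\mu_i$ returns the same value on the same entity. I would state this consistency clause explicitly, since it is what promotes $\chi$ from a relation to a function.

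The role of \emph{orthogonality} is where I expect the main obstacle to lie, because it is given only informally. I would interpret it as the condition that the ranges $\mathit{Char}_i$ are chosen so that no coordinate value constrains another, i.e.\ every tuple of $\Upsilon$ is a priori admissible and the product carries no hidden redundancy. The delicate point is that orthogonality is \emph{not} needed to place $e$ in $\Upsilon$ at all, since the product map lands there regardless; it is needed only for $\Upsilon$ to function as an honest coordinate system whose dimensions carry independent information. Conflating these two uses is the easy error. I would therefore separate the existence claim, which follows from measurability and totality alone, from the representational-adequacy claim, which is where orthogonality does its work, and prove only the former as the logical content of the statement while flagging the latter as a modelling desideratum on the observer rather than a theorem.
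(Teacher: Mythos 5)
The first thing to note is that the statement you were asked about is not a theorem: in the paper it is an \emph{Observer Abstraction} (a definitional prescription in the \texttt{ob} environment), and the paper supplies no proof of it whatsoever. The observer is simply instructed to set up $\Upsilon = \mathit{Char}_1 \times \cdots \times \mathit{Char}_n$ and to regard each entity as a tuple of characteristic values. So there is no paper proof against which your argument can be matched; what you have written is a well-definedness audit of the definition, which is a legitimate exercise but proves a statement the paper merely stipulates. Within that reading, your reconstruction is sound and in one place genuinely insightful: your insistence on separating the existence claim (every $e \in E$ determines a point of $\Upsilon$) from the representational-adequacy role of orthogonality matches the paper exactly, since orthogonality is never used formally anywhere downstream --- the distance measure $D$, the bounds $\delta_{mut}$ and $\delta_{rep\_mut}$, and all the axioms operate coordinatewise and would go through without it. Your reading of measurability as computability is also consonant with the paper's global requirement that the whole observation process $\Gamma \mapsto_{Obj} \Pi$ be a computable transformation.

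Two points of divergence from the paper's own machinery are worth flagging. First, you secure totality by postulating that each measurement procedure $\mu_i$ halts with a value on every entity; the paper instead handles potential partiality by a structural device you did not use, namely adjoining a distinguished zero element $0_{char_i}$ to each $\mathit{Char}_i$ to represent \emph{absence} of the $i$-th characteristic (with $0_{char_i} \leq_i v$ for all $v$ when the dimension is ordered). With that convention, totality of $\chi : E \to \Upsilon$ is automatic by fiat rather than by a halting hypothesis, and your argument would be cleaner if it invoked it. Second, be careful not to let the phrase ``each entity is a point in $\Upsilon$'' suggest injectivity: the paper explicitly allows several entities to occupy the same point (it recovers $E$ from $\Upsilon$ by annotating points with multiplicities, and separately introduces tags such as $[s]_i$ to distinguish syntactically identical entities). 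Your $\chi$ is accordingly many-to-one, and identity of entities lives in the tagging mechanism, not in $\Upsilon$; your single-valuedness clause is the right condition to state, but it is your addition by analogy with the determinism requirement on entity sets, not something the paper asserts for characteristics.
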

For a vector $x = (a_1, a_2, \ldots, a_r), i^{th}$ element ($a_i$)
will be denoted as $x[i]$. For some of the characteristics
observer might define a `partial ordering' ($\leq_i$ for $Char_i
\in \Upsilon$), which can be used to compare values for those
characteristics. The absence of any characteristics in an entity
is represented by special zero element $0_{char_i}$ such that if
$\mathit{Char}_i$ is(partially) ordered then $\forall v \in
\mathit{Char}_i$. $0_{char_i} \leq_i v$.

Notice that, observable characteristics need not to be limited to
syntactic level  or \emph{structural properties} and can also
include semantic properties - \emph{observable patterns of
behaviors}. Though semantic properties are much more difficult to
observe and measure than the syntactic ones since they require
abstracting the patterns of reactions over a range of states.

In case of $\mathbf{CBS}$, for simplicity we may assume that model
consists of binary strings of size $n$. In that case each position
of the string can represent one orthogonal dimension and we have
only two binary values (\{0, 1\}) at any position in a string for
corresponding dimension. Thus character space $\Upsilon$ in
$\mathbf{CBS}$ is $n$ dimensional binary hypercube with each
string occupying a possible diagonal end point. We will represent
this hypercube as $\{0,\ 1\}^n$. The ordering relation $\leq$ for
all dimensions is the same and defined as $0 < 1$.

In terms of such character space $\Upsilon$, an entity set $E$ at
any state can be defined by annotating the points in $\Upsilon$
with integer constants denoting the multiplicity of the entities
present in $E$ with characteristics defined by the point.

\subsubsection{Distance Measures}
\label{sec:distance}

Another important structure in the framework is the
``dissimilarity measure" ($D$) to define the ``observable
differences" ($\mathbf{Diff}$) between the characteristics of the
entities in a population. The distance measure defined below can
be used by the observer to distribute entities into separate
clusters such that entities in the same cluster are sufficiently
similar while entities from different clusters are distinguishably
different in their characteristics. Again exact definition of
distance function is
model dependent.\\

\begin{ob}[\textbf{Distance Measure}] An observer defines a decidable
clustering distance measure $D: E \times E \rightarrow
\mathbf{Diff}$, where $\mathbf{Diff}$ is the set of values to
characterize the observable ``differences" between entities in
$E$.
\end{ob}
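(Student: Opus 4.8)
The plan is to treat the Distance Measure abstraction as an existence-and-construction claim: that from the coordinate structure of $\Upsilon$ one can always build a decidable $D : E \times E \rightarrow \mathbf{Diff}$ suitable for clustering, and then verify the two properties the statement demands --- decidability and the separation of clusters. First I would exploit the fact that every entity $e \in E$ is already a point $(v_1, v_2, \ldots, v_n) \in \Upsilon = \mathit{Char}_1 \times \cdots \times \mathit{Char}_n$, so $D$ can be assembled dimension by dimension. For each coordinate I would introduce a local discrepancy $d_i : \mathit{Char}_i \times \mathit{Char}_i \rightarrow \mathbb{R}_{\geq 0}$: on the purely symbolic, unordered characteristics $d_i$ is the discrete metric ($0$ when the values agree and a fixed positive constant otherwise), while on the ordered characteristics $d_i$ uses $\leq_i$ together with the zero element, so that the absence of a feature, encoded by $0_{char_i}$, sits at the bottom and $d_i(0_{char_i}, v)$ grows monotonically with $v$. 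Aggregating, I would take $\mathbf{Diff}$ to be the range of a combination operator (a weighted sum, or the full $n$-tuple of local values) and set $D(e,e') = \bigoplus_{i=1}^{n} d_i(v_i, v'_i)$.

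Second, I would establish decidability, which is exactly what the word ``decidable'' in the statement requires and what the computability clause of the Observation Process independently demands. Each $d_i$ is decidable by construction --- equality of symbolic values and comparison under $\leq_i$ are decidable --- and since $n$ is finite the aggregate $D$ inherits decidability. Third, to justify the ``clustering'' qualifier I would fix a threshold $\theta \in \mathbf{Diff}$ and show that the relation $D(e,e') \leq \theta$ groups $E$ into clusters whose members are pairwise similar while entities drawn from distinct clusters differ measurably; the concrete instantiation in $\mathbf{CBS}$ is the Hamming distance on the hypercube $\{0,1\}^n$ with $0 < 1$, namely $D(s,s') = \sum_{i=1}^{n} |s[i] - s'[i]|$, which is manifestly decidable, is a genuine metric, and separates strings into clusters by bit-agreement.

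The main obstacle is not the syntactic case but the semantic one. The Character Space abstraction explicitly admits characteristics that are ``observable patterns of behaviors'' requiring abstraction over a range of states, and for such a coordinate the local discrepancy $d_i$ need not be a mere table lookup: computing it could in principle demand simulating the model and recognizing behavioral patterns, which threatens termination and hence the decidability of $D$. The delicate step, therefore, is to insist that each semantic characteristic is itself the output of a computable, finite-horizon observation --- a terminating abstraction of the behavior exhibited within a bounded window of states --- so that $d_i$ on that coordinate reduces to a decidable comparison of finite descriptors. Once every coordinate is guaranteed to be computably extracted in this way, decidability of $D$ follows from the finite-dimensional aggregation and the construction above discharges the statement. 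I would close by noting that the freedom in choosing the $d_i$, the weights, and $\theta$ is precisely the observer-dependence the framework intends, so the abstraction asserts the \emph{existence} of such a $D$ for a given observer rather than its uniqueness.
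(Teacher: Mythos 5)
Your proposal misreads the nature of the statement. This is Observer Abstraction~4, a \emph{definitional} component of the framework, not a theorem, and the paper supplies no proof of it --- nor does it need one. In the paper $D$ is data \emph{supplied by the observer}, constrained only to be decidable; the text says explicitly that the ``exact definition of distance function is model dependent,'' and what follows the statement is not an argument but a list of instantiations: the XOR-based vector distance and the Hamming distance for $\mathbf{CBS}$, the pair $[d_g, d_p]$ (geometry bit, pivot bit) for Langton loops, and the $\alpha$-equivalence/tag-difference pair for AlChemy. You instead prove an existence-and-constructibility theorem --- that from the coordinate structure of $\Upsilon$ one can always assemble a decidable clustering measure --- which is a claim the paper never makes and deliberately avoids making, precisely because the framework's point is that the choice of $D$ (like the choice of entities and of $\delta_{mut}$, $\delta_{rep\_mut}$) is part of the observer's freedom, with different choices yielding different downstream inferences.

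Setting aside the category error, one piece of your construction would actively clash with the framework: your primary aggregation of the local discrepancies $d_i$ into a single value in $\mathbb{R}_{\geq 0}$ via a weighted sum. Everything downstream consumes $D$ \emph{componentwise}: Axiom~3 requires $0_{\mathit{diff}_i} \preceq_i D(e,e')[i] \preceq_i \delta_{mut}[i]$ per characteristic, the definition of $\Delta$ bounds $D(e,e')[i]$ by $\delta_{rep\_mut}[i]$ for each $\mathit{Char}_i$, and the Heredity and Heritable Variation axioms test $D(e,e')[i]$ against $0_{\mathit{diff}_i}$ coordinate by coordinate. A scalar-valued $D$ destroys exactly the per-characteristic structure these axioms need; only your parenthetical alternative (taking $\mathbf{Diff}$ to be the full $n$-tuple of local values) is compatible, and that is in fact what the $\mathbf{CBS}$ example does with $\mathbf{Diff} = \{0,1\}^n$. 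Your concern about semantic characteristics threatening decidability is legitimate and your finite-horizon remedy is sensible, but note the paper discharges this worry by fiat rather than by lemma: the Observation Process is required from the outset to be a computable transformation, so any admissible $D$ is decidable by assumption, and no termination argument is ever carried out. Also note the paper nowhere requires $D$ to be a metric or monotone in $\leq_i$, so those parts of your construction impose structure the framework does not demand.
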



Examples include the Hamming distance to define distance between
genomic strings in the Eigen's model of molecular evolution
\cite{ps01}, set of points where two computable functions differ
in their function graphs, or the set of instructions where two
programs may differ. One of the known criterions to define the
concept of species is ``phenotype similarity" \cite{Ridley96},
which can also be seen as another example for distance measure.



In case of $\mathbf{CBS}$, we can define an auxiliary function
$\oplus: \{0, 1\} \times \{0, 1\} \rightarrow  \{0, 1\}$ as a
binary $XOR$ such that we have $0\oplus0 = 1\oplus1 = 0$, and
$1\oplus 0 = 0\oplus1 = 1$. Thus the clustering distance measure
$D: E \times E \rightarrow \{0, 1\}^n$ is defined such that
$\forall i. D(e_1, e_2)[i] = e_1[i] \oplus e_2[i]$, which implies
that $\mathbf{Diff}$ = $\{0, 1 \}^n$. For example in case of two
$n = 3$ bit binary entities $e_1 = [001]_1$ and $e_2 = [101]_2$,
$D(e_1, e_2) = 100$. Other alternatives may include Hamming
distance measure $D(e_1, e_2) = \sum_{i=1}^n (e_1[i] \oplus
e_2[i])$ with $\mathbf{Diff}$ = $\{0, 1, \ldots, n\}$.

\subsubsection{Observable Limits on Mutational Changes}
\label{sec:limits}

The observer needs to specify the limits under which it can
recognize an entity across states even in the presence of
mutational changes in the entity owing to its interactions with
the environment. This is an inherent limiting property on the part
of the observer and could vary among observers. Based upon the
limit referred here as $\delta_{mut}$, an observer can establish
whether two entities in different successive states are indeed the
same with differences owning to mutations or not. The smaller the
limit, the harder it will be for an observer to keep recognizing
entities across states and he would be counting mutated entities
as the new entities. As entities are observed in more and more
refined levels of details, their apparent similarities melt away
and differences become sharply noticeable.

Another type of mutations arise during reproduction, in which case
an observer has to identify whether an entity is indeed an
descendent of another entity even though they might not be
similar. This necessitates us to introduce another bound on
observable reproductive mutations as $\delta_{rep\_mut}$. This
limit on observable reproductive mutations is indeed crucial while
working with models where epigenetic development in the entities
can be observed \cite{Mah97}. This is because in such chemistries
including examples from real life, the ``child" entity and the
``parent" entities do not resemble with each other at the
beginning and observer has to wait until whole epigenetic
developmental process gets unfolded and then compare the entities
for similarities in their characteristics.  $\delta_{rep\_mut}$
assists an observer to establish whether a particular entity could
be treated as a ``descendent" of another entity or not.

Another reason for introducing the limit $\delta_{rep\_mut}$ is
that from the view point of an high level observation process not
recording every micro level details, it is quite essential to
distinguish between parent entities and other secondary entities
involved in the reproductive process. Consider, for example, a
model where entity $A$ reproduces according to reaction $A + B
\rightarrow 2A' +C$, where $A'$ is mutant child entity of $A$,
which can be determined by an observational process only when it
can establish that $A$ and $A'$ are sufficiently similar with
respect to their characteristics, while $A'$ and $B$ are not.
These limits on
observable differences are formally defined as follows: \\

\begin{ob}[\textbf{Mutation Bounds}] Based upon the choice of clustering
distance measure $D$, the observer selects some suitable
$\delta_{mut},\ \delta_{rep\_mut} \in \mathbf{Diff}$, which will
be used later to bound mutational changes (both reproductive and
otherwise) for proper recognition. $\delta_{mut}$ and $
\delta_{rep\_mut}$ are vectors such that each element specifies an
observer-defined threshold on the recognizable mutational changes
for corresponding characteristics.
\end{ob}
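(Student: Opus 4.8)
The plan is to verify that the definition of \textbf{Mutation Bounds} is well-posed, since as an observer abstraction the content to be established is coherence with the structures introduced earlier rather than a derivation from a prior proposition. First I would confirm that the two thresholds live in the correct ambient space: because the clustering distance measure satisfies $D: E \times E \rightarrow \mathbf{Diff}$ and takes values in $\mathbf{Diff}$, requiring $\delta_{mut}, \delta_{rep\_mut} \in \mathbf{Diff}$ guarantees that the later comparisons of the form $D(e_1, e_2) \leq \delta_{mut}$ are type-correct and can be carried out componentwise against exactly the coordinates that $D$ produces.

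Second, I would establish selectability, i.e. that at least one admissible pair of bounds always exists. Since each dimension $\mathit{Char}_i$ of $\Upsilon$ carries at least the partial order $\leq_i$ with least element $0_{char_i}$, the set $\mathbf{Diff}$ inherits a componentwise order, so every candidate threshold vector is comparable to the distance outputs. For the concrete instantiations used in $\mathbf{CBS}$, namely $\mathbf{Diff} = \{0,1\}^n$ or $\mathbf{Diff} = \{0, 1, \ldots, n\}$, the value set is finite and bounded, so an observer can always exhibit at least one admissible $(\delta_{mut}, \delta_{rep\_mut})$, for instance by taking the extremal elements; hence the selection demanded by the abstraction is never vacuous.

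Third, I would tie the definition back to the global computability requirement imposed on the observation process. Because $D$ is decidable by the Distance Measure abstraction, and the order on $\mathbf{Diff}$ is checked coordinatewise over a decidable value set, testing whether $D(e_1, e_2) \leq \delta_{mut}$, and likewise for $\delta_{rep\_mut}$, is itself decidable. This shows the bounds can be used algorithmically in the sequel without reintroducing any infeasible, meta-informational step, thereby keeping the abstraction consistent with the decidability condition stated for $\Gamma \mapsto_{Obj} \Pi$.

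The main obstacle I anticipate is not logical but definitional: there is no canonical or forced choice of the thresholds, so the statement cannot be proved to pin down unique values, only to guarantee that admissible values exist and behave consistently. The delicate point is therefore to argue that the observer-dependence of $\delta_{mut}$ and $\delta_{rep\_mut}$ does not compromise later soundness, i.e. that whatever admissible pair is fixed, recognition of an entity across states and of \emph{descendent} relationships remains well-defined. Making this precise would require the reproduction and heredity axioms deferred to later sections, so at this stage I would only flag the dependence and verify local well-posedness.
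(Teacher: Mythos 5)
The statement you were asked about is an \emph{Observer Abstraction} --- a definitional component of the framework --- and the paper supplies no proof for it at all: it is followed only by an informal remark that the choice of $\delta_{mut}$ and $\delta_{rep\_mut}$ critically affects later inferences (values too large destroy the identification of variability needed for natural selection; values too small destroy recognition across states, so every entity appears \emph{de novo}). Your decision to read the task as a well-posedness check is therefore the right instinct, and your three verification points --- membership in $\mathbf{Diff}$ so that the comparisons in Axiom 3 and in the definition of $\Delta$ are type-correct; existence of at least one admissible pair of bounds, trivially so in the finite $\mathbf{CBS}$ instantiations; and decidability of the threshold tests, in keeping with the computability condition on $\Gamma \mapsto_{Obj} \Pi$ --- are all sound and go beyond what the paper makes explicit. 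Your closing caveat, that the bounds are non-canonical and observer-dependent with soundness of later recognition deferred to the reproduction and heredity axioms, is precisely the content of the paper's own follow-up paragraph, so there you and the paper coincide.

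Two overstatements are worth correcting. First, you claim that $\mathbf{Diff}$ \emph{inherits a componentwise order} so that \emph{every candidate threshold vector is comparable to the distance outputs}: the paper posits only partial orders $\leq_i$, and only for \emph{some} characteristics --- the definition of $\Delta$ is explicitly guarded by ``if $\mathit{Char}_i$ has an ordering'' --- so comparability is not guaranteed, and the later axioms are written to tolerate its absence; your argument should carry the same guard rather than assert comparability outright. Second, your type-correctness step smooths over a genuine looseness in the paper itself: the abstraction demands $\delta_{mut},\ \delta_{rep\_mut} \in \mathbf{Diff}$ while simultaneously describing them as vectors indexed by characteristics, yet the paper's own Hamming-distance alternative has $\mathbf{Diff} = \{0, 1, \ldots, n\}$, a set of scalars with no per-characteristic components. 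The vector reading is thus only one admissible instantiation, and a careful well-posedness check would either restrict attention to vector-valued $\mathbf{Diff}$ or note explicitly that the componentwise notation $\delta_{mut}[i]$ used in Axiom 3 presupposes it. Neither point undermines your conclusion, but both belong in an honest verification of a definition whose coherence is exactly what is at issue.
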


It is important to note that the choice of $\delta_{mut},\
\delta_{rep\_mut}$ critically affects further inferences. For
example, a choice of very large values would result in the lack of
identification of variability in characteristics and thus make it
difficult to infer natural selection (discussed later). On the
other hand if an observer decides to select very small values for
$\delta_{mut}$ then it cannot recognize persistence of an entity
across states under changes, similarly small values for
$\delta_{rep\_mut}$ make it harder to establish reproductive
relationship among entities and for such an observer every new
entity would seem to be appearing \emph{de novo} in the model.


\subsection{Evolutionary Components}
\label{chap:evocomponents}

Having defined the observation process as a computable
transformation from the underlying sequence of observed states of
the model to the set of components involving entities and their
observable characteristics with measurable differences as well as
observable limits on such differences, we will now proceed with
formalization of the fundamental evolutionary components:
mutations, reproduction, heredity and natural selection.

\subsubsection{Mutations}
\label{mutformalized}

For evolution to be effective entities should change (mutate) over
the course of their interaction with the environment (or other
entities.) Moreover, there can also be observable differences
between the child and the parent entities arising out of
reproductive processes. These changes in the characteristics of
the entities may or may not be inheritable based upon the design
of the model and the simulation instance.

Mutations can be considered of carrying two kinds of effects in
the entities: one where mutations change the values for specific
characteristics, secondly where after mutation an entity has at
least one new character not present before or when certain
characteristics are lost. We define a \emph{Recognition relation}
to establish the non reproductive
mutational changes in the entities: \\

\begin{df}[\textbf{Recognition Relation}]
The observer establishes recognition of entities across states of
the model with (or without) mutations by defining the function
$\mathbf{R_{\delta_{mut}}}$: $E \rightsquigarrow E$, which is a
partial function and satisfies the following axioms: \\
\end{df}

\begin{ax}
$\forall e, e' \in E\ .\  \mathbf{R_{\delta_{mut}}}$$(e) = e'
\Rightarrow F(e') = F(e) + 1$.
\end{ax}

Informally, the axiom states that entities to be recognized as the
same even with mutational changes have be observed in successive
states. $\mathbf{R_{\delta_{mut}}}$ is defined anti symmetric to
ensure that entities are recognized based upon the time
progression of the model not in any other arbitrary order.\\

\begin{ax} $\mathbf{R_{\delta_{mut}}}$ is an injective function,
that is, $\forall e, e' \in E$. $\mathbf{R_{\delta_{mut}}}$$(e) =$
$\mathbf{R_{\delta_{mut}}}$$(e')$ $\Rightarrow e = e'$
\end{ax}

Informally, the axiom states that no two different entities in one
state can be recognized as the same in the next state. \\

\begin{ax}
$\forall e, e' \in E$. $\forall \mathit{Char}_i \in \Upsilon$.
$\mathbf{R_{\delta_{mut}}}$$(e) = e' \Rightarrow
0_{\mathit{diff}_i} \preceq_i D(e, e')[i] \preceq_i
\delta_{mut}[i]$
\end{ax}

Informally $\mathbf{R_{\delta_{mut}}}$$(e)$ is that $e' \in E$,
which is recognized in the next state by the observer as $e$ in
the previous state with possible mutations bounded by
$\delta_{mut}$. In other words if entity $e$ mutates and changes
in the next state and identified as $e'$, then observer might be
able to recognize $e$ and $e'$ as the same if these changes
(between $e$ and $e'$) are bounded by $\delta_{mut}$.

\subsubsection{Reproduction}
\label{reproformalized}

Reproduction is one of the fundamental components of evolution.
Through reproduction, entities pass on their characteristics to
the next generation and increase the population size. Reproduction
is possibly the only way by which abstract entity structures can
persist across generations in case of those Alife models, where
entities do not persist forever. In our framework, the way an
observer establishes reproduction is by providing observed
evidence for it. This is done by defining causal descendence
relationships among the entities across states. The parent and the
child entities are recognized by the observer as being
sufficiently similar and ``causally'' connected across the
states:\\

\begin{ob}[\textbf{Observed Causality}]
$C \subseteq E \times E\ .\  C$ establishes the observed causality
among the entities appearing in the successive states. $C$
satisfies the following axiom: \\
\end{ob}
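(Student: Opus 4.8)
This final statement is a definitional abstraction rather than a theorem to be derived, so the task is really to specify and justify the axiom that the causality relation $C$ must satisfy rather than to prove an assertion. The plan is to impose exactly the two kinds of constraint that the preceding discussion of $\delta_{rep\_mut}$ motivates: a temporal constraint forcing a descendant to be observed strictly after its parent, and a similarity constraint bounding the parent--child dissimilarity by the reproductive-mutation threshold. Reading $(e, e') \in C$ as ``$e$ is a parent of the descendant $e'$'', I would propose
\[
\forall e, e' \in E.\ (e, e') \in C \Rightarrow F(e') > F(e) \ \wedge\ \forall \mathit{Char}_i \in \Upsilon.\ 0_{\mathit{diff}_i} \preceq_i D(e, e')[i] \preceq_i \delta_{rep\_mut}[i].
\]
The first conjunct encodes causal directionality along the observed run, mirroring the first axiom for $\mathbf{R_{\delta_{mut}}}$ but relaxed from $F(e) + 1$ to $F(e') > F(e)$ so as to admit the epigenetic case, where a child's defining characteristics stabilise only several states after its first appearance. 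The second conjunct is the reproductive analogue of the distance axiom for $\mathbf{R_{\delta_{mut}}}$, admitting as descendants only those entities lying within the observer's recognisable reproductive neighbourhood of the parent.

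First I would check that both conjuncts are well-posed against the already-declared objects: $F$ supplies the state index needed for the temporal clause, while $D$ and $\delta_{rep\_mut}$ supply the per-characteristic comparison needed for the similarity clause. Since $D$ is required to be decidable and $\delta_{rep\_mut} \in \mathbf{Diff}$ is fixed in advance, membership in $C$ stays decidable, as the computability of the Observation Process demands. I would then verify that the axiom genuinely separates parents from secondary reactants in the motivating reaction $A + B \rightarrow 2A' + C$: the similarity clause admits $(A, A')$ because $D(A, A') \preceq \delta_{rep\_mut}$ componentwise, while it is meant to reject $(B, A')$ because $B$ and $A'$ differ beyond the threshold. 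Making the rejection watertight is a matter of the observer's obligation — not guaranteed by the axiom alone — to choose $\delta_{rep\_mut}$ small enough that unrelated co-reactants fail the bound; the axiom only states the necessary condition that every admitted pair must meet.

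The main obstacle I anticipate is the tension the epigenetic relaxation introduces. Weakening the temporal clause to $F(e') > F(e)$ is what lets the observer defer the similarity comparison until the child has matured, but it simultaneously loosens directionality and risks admitting spurious long-range pairs that merely happen to be similar. I would resolve this by stipulating that the distance in the similarity clause be evaluated on the child's \emph{matured} characteristics — the point in $\Upsilon$ reached once its developmental trajectory settles — rather than on its first-observed values, and, where stronger acyclicity is wanted, by additionally requiring $C$ to be antisymmetric with irreflexive transitive closure so that no entity becomes its own ancestor. The most delicate step is then to confirm that this relation composes consistently with the Recognition Relation $\mathbf{R_{\delta_{mut}}}$, so that an entity merely recognised across states under non-reproductive mutation is never double-counted as its own offspring.
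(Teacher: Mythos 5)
Your axiomatization diverges from the paper's in three substantive ways, and two of them are genuine gaps. First, the paper's Causality Axiom reads: $\forall e, e' \in E\,.\ (e, e') \in C \Rightarrow [F(e') = F(e) + 1] \wedge [\not\exists e'' \in E\,.\ F(e'') = F(e) \wedge \mathbf{R_{\delta_{mut}}}(e'') = e']$. Its central content is the second conjunct --- a disambiguation clause guaranteeing that $e'$ is not merely the recognized mutant of some entity already present, i.e.\ that mutation is never confused with reproduction. Your axiom omits this entirely. You do flag the worry at the end (``never double-counted as its own offspring'') but leave it unresolved, whereas in the paper it \emph{is} the axiom: both case studies devote their causality lemmas (via pivots in Langton loops, via the third tag component in AlChemy) precisely to verifying this non-mutation clause. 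Second, you fold the similarity bound $D(e,e')[i] \preceq_i \delta_{rep\_mut}[i]$ into $C$ itself. The paper deliberately keeps $C$ similarity-free and isolates that bound in the separate relation $\Delta$, intersected only later in $\mathbf{AncestorOf} = ((C \cup \mathbf{R_{\delta_{mut}}})^+ \cap \Delta)^+$. This is not a stylistic difference: in the epigenetic scenario the paper explicitly targets, the child at its first appearance does \emph{not} resemble the parent, so $D(p,c)$ may exceed $\delta_{rep\_mut}$ at the causal event, and your axiom would exclude $(p,c)$ from $C$ at exactly the moment causality must be recorded --- killing the chain before the transitive closure can mature it. Your patch (evaluate $D$ on ``matured'' characteristics) is an informal workaround for a problem the paper solves structurally: $C$ records one-step causality, $\mathbf{R_{\delta_{mut}}}$ tracks development, the closure spans the maturation, and $\Delta$ applies the similarity test only on the closed relation.

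Third, and relatedly, you relax the temporal clause to $F(e') > F(e)$ to accommodate epigenesis, whereas the paper insists on $F(e') = F(e) + 1$ and obtains multi-state descent through the closure $(C \cup \mathbf{R_{\delta_{mut}}})^+$ instead. Your relaxation creates the spurious-long-range-pair problem you yourself identify, and your proposed remedies (antisymmetry, irreflexive transitive closure) are extra machinery the paper never needs, because keeping $C$ strictly one-step makes temporal directionality and acyclicity automatic. In short: the architecture of the framework is local causality plus recognition, closed transitively, then filtered by $\Delta$; your version collapses these layers into a single relaxed relation, which both loses the mutation/reproduction separation and contradicts the epigenetic-development case that motivated $\delta_{rep\_mut}$ in the first place.
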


\begin{ax}[\textbf{Causality}]
$\forall e, e' \in E\ .\  (e, e') \in C \Rightarrow [F(e') = F(e)
+ 1] \wedge [\not\!\exists e'' \in E\ .\  F(e'') = F(e) \wedge
\mathbf{R_{\delta_{mut}}}$ $(e'') = e']$
\end{ax}

Informally, the axiom on causal relationship $C$ states that, if
an entity $e$ is causally connected to another entity $e'$, then
the observer must observe $e'$ in the next state of $e$ and never
before. This is to ensure that mutations are not confused by the
observer with reproductions. Notice that in order to establish
causal relation between entities, observers need not necessarily
know the underlying reaction semantics or the micro level dynamics
of the model. Only requirement is that the observer's claimed
causality conforms with the stated axiom. In essence, this
formulation of causality is an abstract specification which
demands observers to identify the entities which have been
observed to be causal sources for the appearance of a new entity.
Only then proper descendance relation for the new entity can be
established.

Apart from causality $C$ we also need auxiliary relation $\Delta$
to determine that the differences due to the reproductive
mutations are also bounded by $\delta_{rep\_mut}$. \\

\begin{df}
$\Delta \subseteq E \times E$ such that $\forall e, e' \in E \ .\
(e, e') \in \Delta \Leftrightarrow \forall \mathit{Char}_i \in
\Upsilon\ .$  if $\mathit{Char}_i$ has an ordering then $D(e,
e')[i] \preceq_i \delta_{rep\_mut}[i]$.
\end{df}

Informally for $(e, e'$) to be in $\Delta$, their differences for
each single characteristic $Char_i$ must be bounded by
$\delta_{rep\_mut}[i]$. \\

Based on the thus established notion of ``causal'' relationships
between entities and $\Delta$, we will define
$\mathbf{AncestorOf}$ relation, which connects entities for which
an observer can establish descendence relationship across
generations. \\

\begin{df}$\mathbf{AncestorOf}$ = $(\ (C \ \cup \
\mathbf{R_{\delta_{mut}}})^+ \ \cap \ \Delta)^+$
\end{df}

In this definition the (inner) transitive closure of $(C \ \cup \
\mathbf{R_{\delta_{mut}}})$ captures the observed causality ($C$)
across multiple states even in cases when ``parent" entities might
undergo mutational changes ($\mathbf{R_{\delta_{mut}}}$) before
``child" entities complete their ``epigenetic" maturation with
possible reproductive mutations. Intersection with $\Delta$
ensures that causally related parent and child entities are not
too different from each other, that is, reproductive mutational
changes are under observable limit. Outer transitive closure is to
make $\mathbf{AncestorOf}$ relationship transitive in nature so
that entities in the same lineage can be related with each other.
For $e, e' \in E$, $(e, e') \in$ $\mathbf{AncestorOf}$, describes
that $e$ is observed as an ancestor of $e'$.


\begin{figure}
\centering
\includegraphics[scale=0.7]{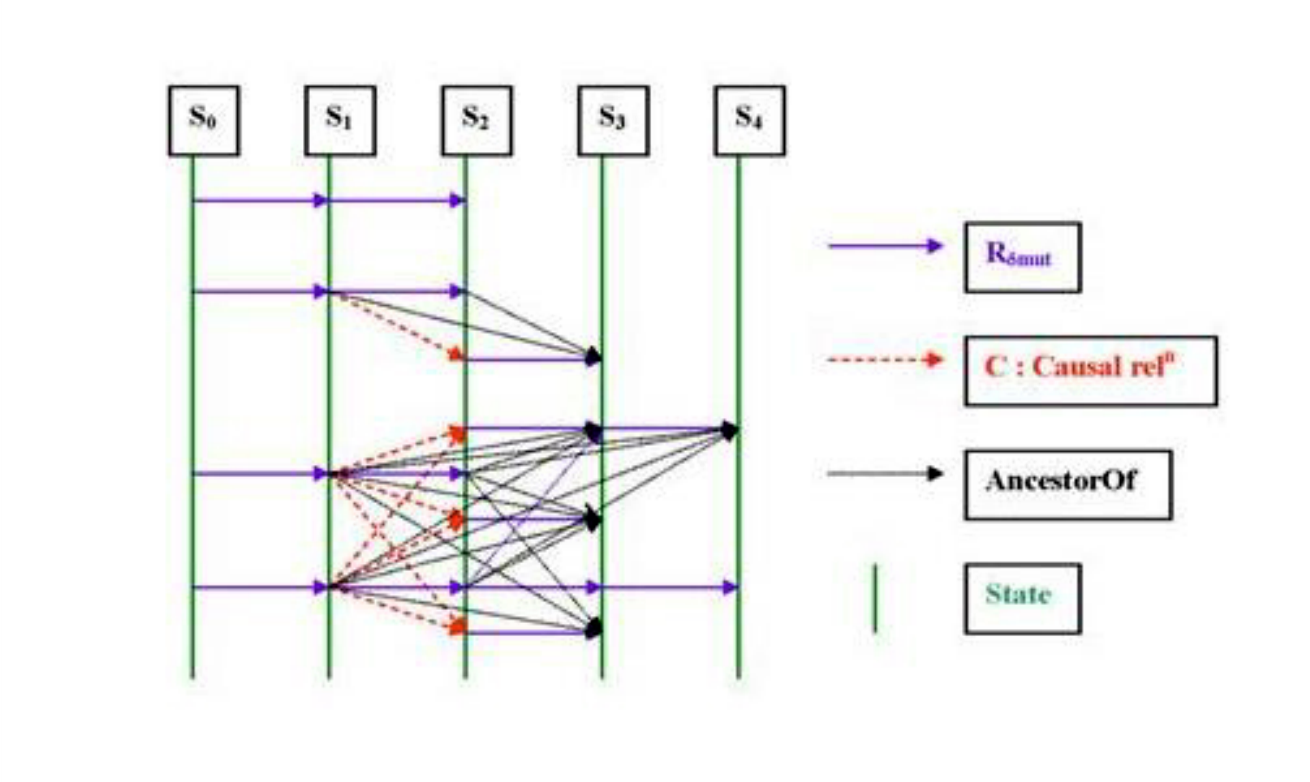}
\caption[Graphical view of evolutionary relations]{Graphical view
of the relationships between entities in successive states.
Recognition relation $\mathbf{Rec}$, Causal relation $C$, and
$\mathbf{AncestorOf}$.} \label{fig:hassediagram}
\end{figure}

Figure~\ref{fig:hassediagram} depicts graphically the
relationships between entities in successive states. Vertical
lines represent the states ($S_0, S_1, S_2, S_3, S_4$). Various
kinds of arrows represent different relationships: recognition
relation $\mathbf{R_{\delta_{mut}}}$, causal relation $C$, and
$\mathbf{AncestorOf}$. The end points of the arrows on state lines
represent entities.

\begin{claim}
Case of Reflexive Autocatalysis.
\end{claim}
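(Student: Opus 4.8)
The plan is to instantiate a minimal autocatalytic reaction inside $\mathbf{CBS}$ and verify that the observer abstractions, together with the Causality Axiom, certify the resulting self-reproduction through $\mathbf{AncestorOf}$, while isolating the single place where the observation process must supply information that the measurable characteristics cannot. Concretely I would take a reaction $A + \text{(substrate)} \to 2A$ in which one copy of the string $A$ catalyses the assembly of a syntactically identical copy of itself. Using the tagging mechanism of the Entity Set abstraction I would name the three relevant entities: let $a \in E$ with $F(a) = S$ be the catalyst, and let $a', b \in E$ with $F(a') = F(b) = S+1$ be, respectively, the persisting catalyst and the newly assembled copy; all three carry the string $A$, with $a$ and $a'$ sharing a time-invariant tag and $b$ stamped with a fresh tag.

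First I would record the persistence of the catalyst by setting $\mathbf{R_{\delta_{mut}}}(a) = a'$ and checking the recognition axioms: the successive-state axiom holds since $F(a') = F(a) + 1$; the injectivity axiom holds because no other state-$S$ entity is mapped to $a'$; and the mutation-bound axiom holds because $a$ and $a'$ are identical, so $D(a, a')[i] = A[i] \oplus A[i] = 0_{\mathit{diff}_i} \preceq_i \delta_{mut}[i]$ for every $\mathit{Char}_i \in \Upsilon$. Next I would declare $(a, b) \in C$ and discharge the Causality Axiom. Its first conjunct, $F(b) = F(a) + 1$, holds by construction; its second conjunct requires that no $e''$ with $F(e'') = S$ satisfies $\mathbf{R_{\delta_{mut}}}(e'') = b$, and this is precisely where the fresh tag on $b$ is used: since $b$ bears a tag never previously assigned, it cannot be the recognition-image of any state-$S$ entity under a tag-respecting, injective $\mathbf{R_{\delta_{mut}}}$. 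Because $D(a,b) = 0$ componentwise we also have $(a,b) \in \Delta$, so $(a,b)$ lies in $(C \cup \mathbf{R_{\delta_{mut}}})^+ \cap \Delta$ and hence in $\mathbf{AncestorOf}$, establishing the autocatalytic copy as a genuine observed descendant of its template.

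The hard part will be the symmetry between $a'$ and $b$: since they share every measurable characteristic, $D(a, a') = D(a, b) = 0$, so the characteristic-based relations $D$, $\mathbf{R_{\delta_{mut}}}$, and $\Delta$ are equally consistent with the opposite assignment $\mathbf{R_{\delta_{mut}}}(a) = b$ together with $(a, a') \in C$. I would argue that both assignments yield the same $\mathbf{AncestorOf}$ edge and are therefore observationally equivalent, but that the choice itself cannot be made from $\Upsilon$ alone; it must be fixed by the tag-assignment policy that the observation process commits to as part of $Obj$. I expect to state explicitly that without such a policy the observation process is underdetermined on reflexive autocatalysis, and to connect this residual freedom to the false-positive concern of Section~\ref{false+ve}, framing it as a consequence of the observer's finite resolving power rather than a gap in the axioms themselves.
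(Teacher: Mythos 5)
There is a genuine gap: your construction is not reflexive autocatalysis but ordinary single-step template copying, so it never exercises the one piece of machinery the claim is actually about. A reflexive autocatalytic cycle (the paper's examples are the Calvin cycle, the reductive citric acid cycle, and the formose system) regenerates the entity only through a chain of intermediates: $A + X_1 = A_1 + Y_1$, $A_1 + X_2 = A_2 + Y_2$, \ldots, $A_{n-1} + X_n = mA' + Y_n$, with $A'$ a \emph{variant} of $A$, $(A, A') \in \Delta$. No single causal step connects $A$ to $A'$; the whole point of the paper's proof is that the observer records the chain $(A, A_1), (A_1, A_2), \ldots, (A_{n-1}, A') \in C$ (together with the assumption $(A,A) \in \mathbf{R_{\delta_{mut}}}$, i.e., $A$ is unchanged until $A'$ appears), so that the \emph{inner transitive closure} $(C \cup \mathbf{R_{\delta_{mut}}})^+$ contains $(A, A')$, and intersecting with $\Delta$ then certifies multi-step reproduction with bounded variation. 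In your reaction $A + \text{substrate} \to 2A$ the pair $(a,b)$ is already in $C$, the transitive closure is idle, and the variation is identically zero — so what you verify is the direct-reproduction scenario already handled by $\mathbf{Parent_{\Delta}}$ and the Axiom of Reproduction, not the case the claim was introduced to cover.

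Two smaller mismatches follow from the same misreading. First, in a genuine cycle the template $A$ is consumed into $A_1$ rather than persisting as a catalyst alongside a freshly stamped copy, so your careful symmetry analysis between $a'$ and $b$ (which copy gets $\mathbf{R_{\delta_{mut}}}(a)$, which gets $C$) — while a legitimate observation about tag-assignment policy and false positives in the sense of Section~\ref{false+ve} — addresses a situation that does not arise in the paper's setting and substitutes for, rather than supplies, the missing argument. Second, the claim explicitly concerns reproduction \emph{with variation} ($A' \neq A$ within $\delta_{rep\_mut}$), which your exact-copy instance cannot exhibit. To repair the proof, replace the one-step reaction by an $n$-step chain of observed causal pairs through distinct intermediates, and show that $(A, A')$ enters $\mathbf{AncestorOf}$ via $(\,(C \cup \mathbf{R_{\delta_{mut}}})^+ \cap \Delta\,)^+$ even though $(A, A') \notin C$.
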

\begin{proof}
In the simplest form, a reflexive autocatalytic cycle is
represented as a system of reaction equations:
\begin{eqnarray*}
  A + X_1 &=& A_1 + Y_1 \\
  A_1 + X_2 &=& A_2 + Y_2 \\
  \vdots \\
  A_{n-1} + X_{n} &=& mA' + Y_n
\end{eqnarray*}
where $m$ copies of entity $A'$ are produced at the end and that
entity $A'$ is a variation of entity $A$, i.e., $(A, A') \in
\Delta$. Such autocataltic cycles are supposed to be the chemical
basis of biological growth and reproduction. Examples include the
Calvin cycle, reductive citric acid cycle, and the formose system.
Competing cycles of this sort can even undergo limited evolution,
though they are supposed to have very limited
heredity~\cite{ss97}.

In the current framework suppose an observer could determine the
causal relations - $(A, A_1)$, $(A_1, A_2)$, $\ldots$, $(A_{n-1},
A')$. Also assume that entity $A$ does not undergo any changes
before $A'$ is produced, that is, $(A, A) \in R_{mut}$. Then $(C \
\cup \ \mathbf{R_{\delta_{mut}}})^+$ would contain $(A, A')$ so
also would $(\ (C \ \cup \ \mathbf{R_{\delta_{mut}}})^+ \ \cap \
\Delta)$ establishing the reproduction of $A$ through reflexive
autocatalytic cycle and with variation.
 \end{proof}

\begin{claim}
Recognition of reproductive relationships under parental mutations
together with reproductive mutations and epigenetic developments
in the child entities.
\end{claim}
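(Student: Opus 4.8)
The plan is to follow the demonstrative style of the preceding claim: rather than prove a universal statement, I would exhibit a concrete observed run in which all three phenomena---parental (non-reproductive) mutation, reproductive mutation, and epigenetic development of the child---occur at once, and then verify that $\mathbf{AncestorOf}$ still links the original parent to the fully matured child. First I would lay out a run $S_0, S_1, \ldots, S_k$ carrying two interleaved lineages. The parental lineage is a chain $A = A_0, A_1, \ldots, A_p$ of successive recognitions $\mathbf{R_{\delta_{mut}}}(A_j) = A_{j+1}$, modelling the parent drifting under mutations while it persists. At some intermediate state the still-immature child is born by a causal event $(A_r, B_0) \in C$, and its epigenetic maturation is a second recognition chain $\mathbf{R_{\delta_{mut}}}(B_j) = B_{j+1}$ terminating in the mature form $B = B_m$. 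The only structural assumptions I would impose are those forced by the axioms: by Axiom~1 consecutive members of each chain sit in successive states, and by the Causality axiom $B_0$ is genuinely produced rather than a disguised mutation, i.e. no $e''$ with $F(e'') = F(A_r)$ satisfies $\mathbf{R_{\delta_{mut}}}(e'') = B_0$.

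The core of the argument is then to show $(A, B) \in \mathbf{AncestorOf} = ((C \cup \mathbf{R_{\delta_{mut}}})^+ \cap \Delta)^+$. I would first observe that the mixed path
\[
A \xrightarrow{\mathbf{R}} A_1 \xrightarrow{\mathbf{R}} \cdots \xrightarrow{\mathbf{R}} A_r \xrightarrow{C} B_0 \xrightarrow{\mathbf{R}} B_1 \xrightarrow{\mathbf{R}} \cdots \xrightarrow{\mathbf{R}} B
\]
lies entirely inside $C \cup \mathbf{R_{\delta_{mut}}}$, so that $(A, B)$ belongs to the inner transitive closure $(C \cup \mathbf{R_{\delta_{mut}}})^+$. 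This is exactly where combining $C$ and $\mathbf{R_{\delta_{mut}}}$ under a single closure pays off: the direct causal arrow is $(A_r, B_0)$, not $(A, B_0)$, because by the time of reproduction the parent has already drifted to $A_r$; the recognition links are what stitch $A$ back onto the front of the path and $B$ onto its end.

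The crucial and genuinely delicate step is the intersection with $\Delta$. At birth the child is dissimilar from the parent, so typically $(A, B_0) \notin \Delta$, and several intermediate pairs along the path likewise violate the reproductive-mutation bound. The point is that we never need those intermediate pairs to lie in $\Delta$: having already placed $(A, B)$ in $(C \cup \mathbf{R_{\delta_{mut}}})^+$, it suffices to check the single endpoint pair against $\Delta$. By the definition of $\Delta$ this amounts to verifying $D(A, B)[i] \preceq_i \delta_{rep\_mut}[i]$ for every ordered characteristic $\mathit{Char}_i$, which is precisely the hypothesis that epigenetic maturation brings the child within the observable reproductive-mutation limit of the original parent. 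Granting this single assumption (the analogue of the $(A, A') \in \Delta$ hypothesis in the previous claim), we get $(A, B) \in (C \cup \mathbf{R_{\delta_{mut}}})^+ \cap \Delta$, and since this pair already lies in the relation being closed, $(A, B) \in \mathbf{AncestorOf}$, as asserted.

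I expect the main obstacle to be making explicit why the two operations must be performed in this order---inner closure first, then intersection with $\Delta$---rather than intersecting $C \cup \mathbf{R_{\delta_{mut}}}$ with $\Delta$ before closing. I would argue this by noting that under the latter (incorrect) order every individual step would be required to respect $\delta_{rep\_mut}$; the immature child $B_0$ and the drifted parent $A_r$ would then break the chain, and the genuine ancestry between $A$ and $B$ would be lost. Articulating this asymmetry cleanly, and stating the minimal $\Delta$-hypothesis on the endpoints so that the example is neither vacuous nor over-constrained, is the part that needs the most care; the closure bookkeeping itself is routine.
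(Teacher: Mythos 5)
Your proposal is correct and follows the same skeleton as the paper's proof: a recognition chain for the drifting parent, a recognition chain for the epigenetically maturing child, a single causal arrow joining the two lineages, the inner transitive closure $(C \cup \mathbf{R_{\delta_{mut}}})^+$ to obtain the long-range pair, then the intersection with $\Delta$ and the outer closure. Two differences are worth noting. First, you check $\Delta$ only at the endpoint pair $(A, B)$, whereas the paper requires only that the intersection of the closure with $\Delta$ be nonempty, i.e.\ that \emph{some} pair $(p_m, c_n)$ of a drifted parent stage and a (possibly partially) matured child stage be within $\delta_{rep\_mut}$; the paper's criterion is strictly more permissive, since it can succeed when the original parent and the fully matured child are themselves too far apart but intermediate stages are not, while your version buys a single clean hypothesis at the cost of some generality. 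Second, your bookkeeping of closure membership is actually more careful than the paper's: with the single causal arrow placed at $(p, c)$ as in the paper, the pairs $(p_m, c_n)$ with $m \geq 1$ that the paper lists (e.g.\ $(p_r, c)$) are \emph{not} reachable by any forward path, since from $p_m$ the relation only leads onward along the parent chain and never back to $c$; your placement of the causal event at $(A_r, B_0)$, with recognition links stitching $A$ onto the front and $B$ onto the end, is the version that makes the claimed closure membership literally true. Your closing discussion of why the inner closure must precede the intersection with $\Delta$ matches the paper's own prose rationale following the definition of $\mathbf{AncestorOf}$.
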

\begin{proof}
Let us see what it requires for establishing reproductive
relationship when (parent) entities might be undergoing changes
across states and child entities not only differ from the parent
entities owing to reproductive mutational changes but also that
there exist epigenetic developments in the child entities, which
make it harder for any observer to establish similarities between
child and parent entities by observing the child entities only in
the beginning (i.e., in the state when child entities were
observed for the first time.) Naturally it would require that an
observer observes child entities so long that their epigenetic
development unfolds completely - since in general there cannot be
any fixed limit on the number of states required for such
epigenetic development, we capture this requirement of
observations across states using transitive closure - $(C \ \cup \
\mathbf{R_{\delta_{mut}}})^+$, where $\mathbf{R_{\delta_{mut}}}$
ensures that (mutational) changes in the parent entities and also
the changes in the child entities during epigenetic development
are accounted for.

Lets us assume that in a state $S_i$, a child entity $c$ was
observed for the first time and (parent) entity $p$ present in the
state $S_{i-1}$ was observed to be casually connected to it.
Suppose that for entity $e$ its epigenetic development unfolds
through states $S_{i+1}, S_{i+2}, \ldots, S_{i+r}$ such that with
changes owing to the development $c$ was observed as $c_{1},
c_{2}, \ldots, c_{r}$ in these states with $(c, c_1), (c_1, c_2),
\ldots, (c_{r-1}, c_r) \in \mathbf{R_{\delta_{mut}}}$. Similarly
suppose that parent entity $p$ undergoes mutations in these
successive states and observed as $p_{1}, p_{2}. \ldots, p_{r}$
such that $(p, p_1), (p_1, p_2), \ldots, (p_{r-1}, p_r) \in
\mathbf{R_{\delta_{mut}}}$. It is clear that $(C \ \cup \
\mathbf{R_{\delta_{mut}}})^+$ would contain $(p, c), (p, c_1),
\ldots, (p, c_r)$, $\ldots, (p_r, c), (p_r, c_1), \ldots, (p_r,
c_r)$ among other tuples implying that the intersection of $(C \
\cup \ \mathbf{R_{\delta_{mut}}})^+$ with $\Delta$ would result in
those tuples $(p_m, c_n)$, where $p_m$ and $c_n$ are sufficiently
similar in their characteristic. Therefore if the resultant set
$(\ (C \ \cup \ \mathbf{R_{\delta_{mut}}})^+ \ \cap \ \Delta)^+$
is not empty, the observer can establish the reproductive
relationship between entities $p$ and $c$ even under parental
mutational changes and the epigenetic changes and reproductive
mutations in the child entity.
\end{proof}

Using $\mathbf{AncestorOf}$ relation, we now can consider the
cases of \emph{entity level reproduction} and \emph{Fecundity}:

\subsubsection*{Case 1: Entity Level Reproduction}

We consider the case where instances of individual entities can be
observed as reproducing even though there might not be any
observable increase in the size of the whole population.

For a given simulation of the model, an observer defines
the following $\mathbf{Parent_{\Delta}}$ relation: \\
\begin{df}
\begin{equation*}
\begin{split}
\mathbf{Parent_{\Delta}} = \{ & (p, c) \in \mathbf{AncestorOf} \mid\\
& \not\!\exists e \in E\ .\  [(p, e) \in \mathbf{AncestorOf}
\wedge (e, c) \in \mathbf{AncestorOf}]\}
\end{split}
\end{equation*}
\end{df}
The condition in defining $\mathbf{Parent_{\Delta}}$ is used to
ensure that $p$ is the immediate parent of $c$ and thus there is
no intermediate ancestor $e$ between $p$ and $c$.  Using
$\mathbf{Parent_{\Delta}}$ relation, in order for the observer to
establish reproduction in the model, the following axiom
should be satisfied:\\

\begin{ax}[\textbf{Reproduction}]
$\exists \textit{state sequence}\ T \in \mathcal{T} \ .\
\mathbf{Parent_{\Delta}} \neq \emptyset$
\end{ax}

This means, if there is reproduction in the model, then there
should exists some simulation $T \in \mathcal{T}$ of the model,
where at least one instance of reproduction is observed.

In case of $\mathbf{CBS}$, we consider a very simple model of
reproduction, where at any state of the model some of the strings
are randomly chosen and are copied with some random errors. How it
is done remains hidden from the observer but the observer can
observe which parent entities are chosen for copying and can
establish causal relation between these parent and their copied
child entities if the random errors occur only at even positions
as the way $\delta_{rep\_mut}$ has been defined in
Section~\ref{sec:limits}. It can be easily seen that under such
construction scheme \emph{Axiom of Reproduction} will be
satisfied.

\subsubsection*{Case 2: Population Level Reproduction - Fecundity}

Though entity level reproduction is essential to be observed, for
natural selection it is the population level collective
reproductive behavior (fecundity), which is significant owing to
the \emph{carrying capacity} of the environment. Since carrying
capacity is an limiting constraint on the maximum possible size of
population, an observer needs to establish that there is no
perpetual decline in the size of the population. In other terms
for all generations, there exists a future generation that is of
the same size or larger. This allows cyclic population sizes where
the cycle mean grows (or stays steady) over time.
Also in case of fecundity, an observer need not to observe all the
parents in the same state, nor do children need to be observed in
the same states of the model. Formally we require the observer to
establish Fecundity by satisfying the
following axiom:\\

\begin{ax}[\textbf{Fecundity}]
There exist infinitely many different generations of entities in
temporal ordering $G_1, G_2, \ldots$ such that $(\forall G_i
\subseteq E)(\exists G_{j>i} \subseteq E)\ .\ |G_j| \geq |G_i|$
where $G_j = \{c \in E \mid\  \exists a \in G_i\ .\ (a, c) \in
\mathbf{AncestorOf}\}$, (operator $|.|$ returns the size of a set.)\\
\end{ax}

Informally, the axiom states that for every generation of entities
($G_i$), in future there exist generation of its descendent
entities ($G_j$) such that the size of descendent generation must
be equal or more than current generation. Note that the
granularity of the time for determining generations is entirely
dependent on the design of the model and the observation process.

We can now formulate another important axiom from evolutionary
perspective, which asserts that reproduction in the model should
not entirely cease because of the (harmful) mutations.\\

\begin{ax}[\textbf{Preservation of Reproduction under Mutations}]
\label{continuation axiom} Some mutations do preserve
reproduction. Formally, $\exists e \in E\ .\ Ch_e = \{e' \in E:
(e, e') \in \mathbf{Parent_{\Delta}} \cup
\mathbf{R_{\delta_{mut}}}$ $ \} \neq \emptyset \Rightarrow \exists
e'' \in Ch_e\ .\  \{e' \in E: (e'', e') \in
\mathbf{Parent_{\Delta}} \}$ $ \neq \emptyset$
\end{ax}
Informally, this means, there exists entity $e \in E$, which
reproduces (with mutations) and one of those (mutant) children of
$e$ can also further reproduce. $Ch_e$ denotes the set of children
of $e$.

In case of $\mathbf{CBS}$, since copying mechanisms do not work
differently based upon selected entities, hence the errors during
copying process do preserve the above axiom of \emph{Preservation
of Reproduction under Mutations}.

\subsubsection{Heredity}
\label{heredityformalized}


Heredity, yet another precondition for evolution, can in general
be observed in two different levels: Syntactic level and Semantic
level. On \emph{syntactic level}, entity level inheritance is
implied by the structural proximity between parents and their
progenies ranging over several generations - though in case of
continuous structural changes in the parental entities and
epigenetic development in progenies, this would require an
observer to establish structural similarities over a range of
states as discussed earlier with the definition of
$\mathbf{AncestorOf}$ relation. Also for syntactic inheritance to
persist, design of the model needs to ensure that environment,
which controls the reaction semantics of entities, remains
approximately constant over a course of time so that structural
similarities also result into continued reproductive behavior.

Difficulty arises primarily on the level of multi parental
reproduction - in this situation an observer might have to
stipulate some kind of gender types and might have to relax the
mechanism of recognizing the parent-child relationship in a way as
happens for example in case of organic life, where male-female
reproductive process (often) gives birth to a progeny belonging to
``only" one gender type. In such a case, for heredity, an observer
need to ensure that, {\sf over a course of time all the gender
types are sufficiently produced in the population.}

On the other hand it is also possible to observe inheritance on
the semantic level (ignoring structural differences) in terms of
\emph{semantic relatedness} between entities, whereby an observer
can observe that progenies and their parental entities exhibit
similarities in their (reproductive) behaviors under near
identical set of environments. This in turn would require an
observer to identify the possible sequences of observable
reactions between existing entities, which appear to be yielding
new set entities (children) and in the child generation as well
there exist a similar observable reproductive process, which
enables the (re)production of entities. Such an observation would
enable the observer to abstract the reproductive processes
currently operational in the model. The inherent difficulties in
this view are obvious - in essence an observer needs to abstract
the reproductive semantics from observable reactions in the model,
which in turn might require non trivial inferences in absence of
the knowledge of the actual design of the model.

Considering the case of real-life from an observational view
point, semantic view is in fact an abstraction over all the
reproductive processes existing across various species and levels
including the case of bacterial organisms, where next generation
of bacteria may contain a mix of genetic material from various
parental bacterium of previous generation through the process of
horizontal transmission. So while in case of syntactic inheritance
an observer would only be able establish inheritance across
organisms belonging to same species, using semantic view, he could
expand his horizon to the all organic life as a whole.


However, heredity as a mechanism of preservation of syntactic
structures, appears to be crucial for those ALife models where
entities have very limited set of reproductive variations
possible, that is, where environment supports only rare forms of
entities to reproduce and any changes in the syntactic structure
of these reproductive entities may result in the elimination of
the reproductive capability. Real-life on earth as well as the
model of the Langton loops (as discussed further in
Section~\ref{chap:langton}) are definitive examples where most of
the variations in the genetic structure, or the loops
geometry/transition rules result in the loss of
reproductive/replicative
capabilities. 

Also heredity usually requires further mechanisms to reduce
possible undoing of current mutations in future generations owing
to new mutations. Therefore, in order to establish inheritance in
ALife models, sufficiently many generations of reproducing
entities need to be observed to determine that the number of
parent-child pairs where certain characteristics (both syntactic
and semantic) were inherited by child entities without further
mutations is significantly larger than those cases where mutations
altered the characteristics in the child entities. We can express
it as the following axiom:\\

\begin{ax}[\textbf{Heredity}]
Let a statistically large observed subsequence of a run $T$:
$$\Omega = lim_{N \rightarrow \infty} \langle S_n, \ldots S_N
\rangle , n \ll N$$ Consider $\mathbf{Parent_{\Delta}^{\Omega}}$ =
$\{(e, e') \in \mathbf{Parent_{\Delta}}$ $| F(e) \in \Omega \wedge
F(e') \in \Omega \}$ to be the set of all parent - child pairs
observed in $\Omega$. Again let $\mathbf{Inherited_{\Omega}^i}$ =
$\{(e, e') \in \mathbf{Parent_{\Delta}^{\Omega}}$ $| \exists
\mathit{Char}_i \in \Upsilon\ .\  D(e, e')[i] =
0_{\mathit{diff}_i} \}$ be the set of those cases of reproduction
where $i^{th}$ characteristics were inherited without (further)
mutation. Then high degree of inheritance for $i^{th}$
characteristics $\mathit{Char}_i$ implies that
$|\mathbf{Parent_{\Delta}^{\Omega}}|$/$|\mathbf{Inherited_{\Omega}^i}|$
$\simeq 1$. For syntactic inheritance to be observed in a
population of entities, we should have some such characteristics
which satisfy this condition.
\end{ax}

The axiom of heredity together with the axiom of preservation of
reproduction under mutation ensures that reproductive variation is
maintained and propagated across generations. 

\subsubsection{Natural Selection}
\label{natselformalized}

There are several existing notions of selection in the literature
on evolutionary theory
\cite{Fut98,Ridley96,Ridley97,SS00,Mah97,Kimura83}. In case of our
observation based framework we choose to define natural selection
as a \emph{statistical inference} of \emph{average reproductive
success}, which should be established by an observer on the
population of self reproducing entities over an evolutionary time
scale i.e., over statistically large number of states in a state
sequence. Other notions of selection using fitness, adaptedness,
or traits etc. are rather intricate in nature because these
concepts are relative to the specific abstraction of ``common
environment'' shared by entities and ``the environment-entity
interactions'', which are the most basic processes of selection.
Nonetheless selecting appropriate generic abstraction for these
from the point of view of an observation process is not so simple.
Therefore we consider more straightforward approach based upon the
idea that on evolutionary scale the relative reproductive success
is an effective measure, which is also an indicator of better
adaptedness or fitness. We thus define the following (necessary)
axioms for the natural selection:\\

\begin{ax}[\textbf{Observation on Evolutionary Time Scale}] An Observer must
observe statistically significant population of different
reproducing entities, say $\Lambda$ ($|\Lambda| \gg 1$), for
statistically large number of states in a state sequence $T \in
\mathcal{T}$. That is, for a statistically large subsequence
$\Omega$ of $T$, $\Omega = lim_{N \rightarrow \infty} \langle S_n,
\ldots S_N \rangle , n \ll N$, the observer defines the set of
reproducing entities $\Lambda \subseteq \bigcup_{S_j \in
\Omega}SR(S_j)$, where $SR(S_j)$ = $\{e \in E |$ $[F(e) = S_j]
\wedge [\exists e' \in E\ .\  (e, e') \in
\mathbf{Parent_{\Delta}}]\}$ is the set of all reproducing
entities in state $S_j \in \Omega$. \\
 \end{ax}

\begin{ax}[\textbf{Sorting}] Entities in $\Lambda$ should be different
with respect to characteristics in $\Upsilon$ and there should
exist differential rate of reproduction among these reproducing
entities. Rate of reproduction for an entity is the number of
child entities it reproduces before undergoing any mutations
beyond observable limit. \\
In other words, $Rate_{rep}: E \rightarrow N^+$ defined as
$\forall e \in E\ .\  Rate_{rep}(e) = |Child_e|$ where $Child_e =
\{e' \in E | \exists e'' \in E\ .\  (e'', e') \in
\mathbf{Parent_{\Delta}}$ and $[\mathbf{R_{\delta_{mut}}^+}$$(e) =
e'' \wedge \forall \mathit{Char}_i \in \Upsilon\ .\  D(e, e'') =
0_{\mathit{diff}_i}]\}$.
\end{ax}

The above two axioms though necessary are not sufficient to
establish natural selection since these cannot be use as such to
distinguish between natural selection with neutral selection
\cite{SS00}. The following axioms are therefore needed to
sufficiently establish natural selection. \\

\begin{ax}[\textbf{Heritable Variation}] There must be variation in
heritable mutations in population of $\Lambda$. Formally, let
$$Child_{mut} = \{e \in \Lambda | \exists e' \in \Lambda\ .\  (e, e')
\in \mathbf{Parent_{\Delta}} \wedge [\exists \mathit{Char}_i \in
\Upsilon\ .\  0_{\mathit{diff}_i} \prec D(e, e')[i]]\}$$ be the
set of child entities carrying reproductive mutations. Let
$Var\_Child_{mut} \subseteq Child_{mut}$ be the set of those child
entities which carry different mutations with respect to
characteristics in $\Upsilon$, that is, $$\forall e, e' \in
Var\_Child_{mut} \textit{ we have } \exists \mathit{Char}_i \in
\Upsilon \ .\  0_{\mathit{diff}_i} \prec D(e, e')[i]$$ Then axiom
of heritable mutation demands that $|Var\_Child_{mut}|$ $\gg 1$,
that is, there are significantly many child entities carrying
different mutations. \\
\end{ax}

\begin{ax}[\textbf{Correlation}] There must be non zero correlation between
heritable variation and differential rate of reproduction.
Formally,
\begin{equation*}
\begin{split}
\forall \mathit{Char}_i \in \Upsilon\ .\ & \forall e, e' \in
Var\_Child_{mut}\ . \  \textit{the following two conditions should hold:}\\
                  & \begin{split}
                      i)\ e[i] <_i e'[i] \Leftrightarrow & [Rate_{rep}(e) < Rate_{rep}(e')]\ \vee\ [Rate_{rep}(e) > Rate_{rep}(e')]
                  \end{split} \\
                  & ii)\ e[i] =_i e'[i] \Leftrightarrow Rate_{rep}(e) = Rate_{rep}(e')
\end{split}
\end{equation*}
\end{ax}
Informally, this means as the value of characteristics inherited
by the child entity changes, rate of reproduction also changes.
Based upon the environmental pressures with respect to a
particular characteristics, rate of reproduction might either
increase or decrease as the characteristic changes.

The last two axioms state that there must be significant variation
in population (in characters) of entities which must be maintained
for evolutionarily significant periods and that this variation
must be caused by the differences in inheriting mutations from the
parent entities, which in turn directly affect the rate of
reproduction.

Having formalized the fundamental component of evolutionary
processes to be observed in a model, we will illustrate the
framework on two important ALife models in the following Section.
These illustrations will later be used in concluding
Section~\ref{sec:concluding} to extract generic design principles
for ALife research.

\section{Case Studies} \label{sec:case-studies}

\subsection{General Considerations}

Having described the generic formal framework in
Section~\ref{chap:framework}, which formalizes the  concept of
observations and consequent axiomatic inferences to establish the
level of evolution for ALife studies, in the following sections,
we will apply the formalism to different models as case studies.
These case studies include Cellular Automata based Langton Loops
\cite{Langton87} and $\lambda$ Calculus based Algorithmic
Chemistry \cite{ac:Fon92}. The case studies elaborate the steps
and technical details specific to the example universe of the
model, which remained implicitly defined in the generalized
description of the framework.

For a given model, the steps to instantiate the framework can be
described as follows: The observation process works on the
simulations of the model which iteratively change the underlying
states based upon the application of the updation rules of the
model. The observation process starts with the identification of
states of the model ($\Sigma$) during its simulations (i.e., state
sequences $\mathcal{T})$). Usually any change in the model (i.e.
the changes in the set of basic units) may give rise to a change
of the observed state. It is important to note that in some cases
there might be any changes in the observable state of the model
even tough there is ongoing underlying activity in the model, that
is, when model reaches, for example, a fix point.

For every state in the state sequence, the observation process (or
the observer) needs to identify a set of well defined entities
with suitable tagging for individual identification ($E$). These
entities need to be described in terms of their characteristics
($\Upsilon$). Next important task is to define the limits on the
observable mutational changes in individual characteristics of the
entities ($\delta_{mut}$, $\delta_{rep\_mut}$), which will in turn
define the recognition relation ($\mathbf{R_{\delta_{mut}}}$) to
relate entities persisting across states of the model as well to
determine whether two entities might be considered related under
descendent relationship.

Once the sets of entities in various successive states of the
model as well as their characteristics are known, important
evolutionary relationships need to be established between them.
These evolutionary relationship depend upon the intermediate
causal relation ($C$) between the entities as observed under the
mechanics of observation process. Using the limits on mutational
changes as well as causal relationship between entities, we
proceed to define the Ancestor ($\mathbf{AncestorOf}$) and the
Parent sets ($\mathbf{Parent_{\Delta}}$). These sets determine
whether there are entities which might be potentially reproducing
in the model, even with observable changes between parent and
child entities ($\Delta$).

Next stage of the observation process is to ascertain the level of
effectiveness of evolution in the model. Using the long term
observations on the model for statistically large number of
generations, one can infer some statistical patterns for degree of
heredity and variation. For natural selection to be effective,
there should exist large number of reproducing entities with
significant variation in their characteristics such that there
exists correlation of this variation in the characteristics with
the reproductive success of the entities.

This process at the end establishes the validity of all or some
axioms of the framework for the given model which provides clues
to the degree upto which evolutionary processes might be effective
in that model universe. The case studies in the following sections
will illustrate this process in detail.

In these case studies, constructs not explicitly defined are
assumed to be same as what is defined in the framework.


\subsection{Case Study \bf{1}: Langton Loops}
\label{chap:langton}

%

Research on the self reproduction has a long cherished history
starting in early fifties \cite{Burks70,Sipper98,Freitas04}. After
the pioneering work of Alan Turing in early 40s to define the
mechanical meaning of `computation' as a Turing machine
transitions, John von Neumann defined Cellular Automata
(CA)~\cite{Neumann66} to explain the generic logic of self
reproduction in mechanical terms. His synchronous cellular
automata model was a two dimensional grid divided into cells,
where each cell would change in parallel its state based upon the
states of its neighborhood cells, its own state and its transition
rule. For such CA model, von Neumann defined a virtual
configuration space where he demonstrated analytically that there
exists some universal replicator configuration which could
replicate other configurations as well as itself. Though universal
replicators are not found in nature and such self replicator was
extremely large in its size, nonetheless the underlying logic of
treating states of cells in the grid both as `data' as well as
`instruction' was very fundamental contribution of this model and
that was exactly was was discovered later in case of real life
where DNA sequences specify both transcription as well as
translation for their own replication in a cell. Another strength
of von Neumann's formulation was its ability to give rise to
unlimited variety of self replicators
\cite{McMullin00a,McMullin00b}. Over the years this model was
simplified and reduced in size considerably \cite[81-105]{Codd68}.

Finally Langton introduced loop like self replicating structures
in~\cite{ac:Lan84}, which retained the `transcription -
translation' property of von Neumann's model excluding the
capability of universal replication and symbolic computation.
Langton's original self-replicating structure is a 86-cell loop
constructed in two-dimensional, 8-state, 5-neighborhood cellular
space consisting of a string of core cells in state $1$,
surrounded by sheath cells in state $2$. These loops have since
then, been extended into several interesting directions including
evolving Evoloops in \cite{Samaya98b}.

These cellular automata based ALife models offer the ideal example
for our observer (observation process) based framework since these
replicating loops and their variations evolve only with respect to
some high level observation process, which can be used to define
entities (loops) and their evolution. We will illustrate the
formal framework by instantiating it on the Cellular automata
based Langton loop model. Further details on the model itself can
be found in the above references.

\subsubsection*{Instantiating the Framework}
\label{sec:langton_inst}

We consider the case of two dimensional CA lattice based model.An
observation is defined on the CA model by assuming an underlying
coordinate system such that each cell in a two dimensional
cellular automata (CA) lattice can be associated with unique
coordinates (represented as $(x, y)$.) A cell is then completely
represented as $\lan(x, y), s\ran$, where $s \in [0..7]$ is the
state of the cell. When a cell is in state $0$, it is also known
as a \emph{quiescent} cell. Let us denote the set of all cells of
a CA model as $Cell$, which is a potentially infinite set.

For a given cell $\lan(x, y), s\ran \in Cell$, its coordinates can
be accessed as follows: $co_x(\lan(x, y), s\ran) = x$,
$co_y(\lan(x, y), s\ran) = y$, which can be extended to the set of
cells: $\forall Z \subseteq Cell$, $co_x^+(Z) = \bigcup_{c \in Z}
co_x(c)$, $co_y^+(Z) = \bigcup_{c \in Z} co_y(c)$.

$Neigh: Cell \rightarrow 2^{Cell}$ gives the coordinate wise non
quiescent cells in the surrounding neighborhood of a cell.
Formally, $\forall (c = \lan(x, y), s\ran) \in Cell$ we have
\begin{equation*}
Neigh(c) = \{ \lan(x \pm 1, y), s'\ran, \lan(x, y \pm 1), s'\ran\
|\ s' \neq 0\}
\end{equation*}

\subsubsection*{The model Structure}

A CA-based model is usually initialized by setting some finite
number of selected cells to non-quiescent states. At each step,
state of every cell of the model is changed as per the state
transition rules. Therefore we define for an observer \emph{state}
of the Langton's model as the subset of $Cell$ consisting of only
non quiescent cells. It is clear that for the observer change in a
state is observable only if there is a change in the set of non
quiescent cells. The \emph{state} of the model for the observer
will also be referred to as $configuration$. Thus $\Sigma$ denotes
the set of all possible different configurations and a state
sequence in $\mathcal{T}$ is a sequence of configurations observed
in temporal order by the observer starting from some specific
configuration. In the following discussion we will consider a
fixed sequence given as $T \in \mathcal{T}$, starting with a
specific initial state given in Figure~\ref{ftr1} (Time 0).
For the fact that there exist a temporal (total) ordering of
states in $T$, we can also associate an integer sequence $I = [0,
1, 2, \ldots]$ with $T$, which works as an indexing for the
states. With the above structure of Langton's CA model, the
observer takes the following decisions.

\subsubsection*{Entities}

Each entity in some state is characterized by two values - the
connected set of non quiescent cells and the associated
\emph{pivot}. Two cells are connected only if there exists a
consecutive sequence of neighboring non quiescent cells joining
them in the lattice. The (function) \emph{pivot} gives the
coordinates for a cell uniquely associated with an entity in CA
lattice in a particular state. Formally, the set of entities
(loops) in the model is defined as follows:
\begin{equation*}
\begin{split}
E = \{ & [Z, pivot(Z)] \mid \exists\  \textit{a configuration }
          S \in T\ .\  \\
       & [Z \subseteq S \wedge Z \neq \emptyset] \wedge
       [\forall c \in Z\ .\  \exists c' \in Neigh(c)\ .\  c' \in Z] \}
\end{split}
\end{equation*}
To define $pivot$, an observer may choose the coordinates of top
left hand corner cell of an entity as the pivot for it. Formally
$$pivot(Z) = (min\{co_x^+(Z)\}, max\{co_y^+(Z)\})\ \forall (e  =
[Z, pivot(Z)]) \in\ E$$ This gives an obvious characterization for
a two dimensional character space $\Upsilon = \mathit{Char}_1
\times \mathit{Char}_2$ with $\mathit{Char}_1$ being the set of
all non quiescent connected set of cells and $\mathit{Char}_2$
being the set of corresponding pivots. We do not associate
additional tags with entities because pivots can be used to
uniquely identify them in any state of the model.

%
\subsubsection*{State Function}
$F: E \mapsto I$ is defined using a table which associates with
each entity $e \in E$, the index $i \in I$ for the state in which
$e$ is observed.
%


\subsubsection*{Distance Measure}

Distance function $D: E \times E \rightarrow \{0, 1\} \times \{0,
1\}$ is defined such that $\forall e, e' \in E\ .\ D(e, e') =
[d_g, d_p]$ where $d_g$ and $d_p$ are defined as follows: $d_g$ is
$0$ only if both entities have the same number of cells arranged
identically or else it returns $1$. 
$d_p$ is $0$ when the pivots for both the entities are same and
$1$ otherwise.

\subsubsection*{Limits on Observable Mutations}

The observer next selects $\delta_{mut} = [1, 0]$, which means
that observer can recognize an entity in future states even with
mutations (changes in the states, number, or the arrangement of
cells comprising the entity) provided that the pivot remains the
same. Select $\delta_{rep\_mut}= [0,1]$ which implies that for
reproduction observer strictly demands identical geometrical
structure of the parent and child entities, though may have
different pivots - this is essential to capture exact replication
of the loops.

\subsubsection*{Observing Reproduction and Fecundity}

\textbf{Recognition relation} $\mathbf{R_{\delta_{mut}}}$ $: E
\rightarrow E$ is defined as follows:
\begin{equation*}
\forall e, e' \in E, \mathbf{R_{\delta_{mut}}} (e) = e'
\Leftrightarrow\ [F(e') = F(e) + 1] \wedge [D(e, e') \leq
\delta_{mut}]
\end{equation*}
Informally this means two entities in consecutive states are
recognized same only if they have the same pivots. Which also
means observer can recognize entity even with change in the
number, state, and geometrical arrangement in the cells of an
entity across states provided that entity does not shift in CA
lattice altogether (which would result in the change of the
pivot.)\\

\begin{lemma} $\mathbf{R_{\delta_{mut}}}$ satisfies Axiom $1$, Axiom
$2$, and Axiom $3$. \end{lemma}

\begin{proof}
Axiom $1$ and Axiom $3$ are satisfied by definition. Axiom $2$,
which states that $\mathbf{R_{\delta_{mut}}}$ is an injective
function holds because no two entities in the same state share the
same pivot. This is because pivot as defined above is connected to
all other cells of the entity and all the non quiescent cells
which are connected in any state are taken together as one entity.
Thus two different entities in the same state always consist of
cells such that cells in one entity are not connected with the
cells of second entity, and hence always have different pivots.
\end{proof}


\textbf{Causal relation} The relation $C$ between entities in
consecutive states is defined as follows: $C \subseteq E \times E$
such that $\forall\ e, e' \in E$ where $e = [Z_e,
\mathit{pivot}(Z_e)]$ and $e' =[Z_{e'}, \mathit{pivot}(Z_{e'})]$
we require
\[
(e, e') \in C \iff \left\{ \begin{array}{ll} 1.\, & co_x^+(Z_e) \supset co_x^+(Z_{e'})\\
2.\, & co_y^+(Z_e) \supset co_y^+(Z_{e'})\\
3.\, &  \mathit{pivot}(Z_e) \neq \mathit{pivot}(Z_{e'})\\
4. & F(e') = F(e) + 1
\end{array} \right.
\]
Intuitively what we demand with above definition of causal
relation $C$ is that child entity was part of the parent entity
and at certain stage it ``breaks off'' from the parent
entity, as can be seen in Figure~\ref{ftr1} at time step $127$.\\

\begin{lemma} Causal relation $C$ defined above satisfies the Causality Axiom. \end{lemma}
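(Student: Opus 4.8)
The plan is to verify that the causal relation $C$ defined for Langton loops satisfies the Causality Axiom. Recall that the Causality Axiom requires two things for every $(e, e') \in C$: first, that $F(e') = F(e) + 1$ (the child appears in the state immediately following the parent), and second, that there is no entity $e''$ in the parent's state with $\mathbf{R_{\delta_{mut}}}(e'') = e'$ (the appearance of $e'$ cannot be explained away as a recognition/mutation of some entity from the previous state). So the proof decomposes into discharging these two conjuncts.

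First I would dispose of the temporal condition: it is immediate, since condition $4$ in the definition of $C$ literally states $F(e') = F(e) + 1$. That leaves the real content, namely ruling out the existence of a competing $e''$. So the second step is to assume, for contradiction, that some $e'' \in E$ with $F(e'') = F(e)$ satisfies $\mathbf{R_{\delta_{mut}}}(e'') = e'$. By the definition of the recognition relation specialized to this model, $\mathbf{R_{\delta_{mut}}}(e'') = e'$ forces $D(e'', e') \leq \delta_{mut} = [1,0]$; since the second component of $\delta_{mut}$ is $0$, this means $d_p = 0$, i.e. $e''$ and $e'$ share the same pivot: $\mathit{pivot}(Z_{e''}) = \mathit{pivot}(Z_{e'})$.

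The key step is then to derive a contradiction from this. I would argue spatially using conditions $1$, $2$, and $3$ of $C$. Because $e''$ is in the same state as $e$, and (by the Lemma just proved) distinct entities in one state have distinct pivots occupying disjoint connected cell-sets, $e''$ is either $e$ itself or spatially disjoint from $e$. If $e'' = e$, then $e$ and $e'$ share a pivot, directly contradicting condition $3$ of $C$. If $e'' \neq e$, then $e''$'s pivot equals $e'$'s pivot, which by conditions $1$ and $2$ lies strictly inside the coordinate span of $e$ (since $co_x^+(Z_e) \supset co_x^+(Z_{e'})$ and likewise for $y$); I would use the fact that the pivot is the top-left extremal cell to show that an entity whose pivot sits strictly interior to $e$'s footprint cannot be an independent connected component in the same configuration as $e$, again contradicting the disjointness of distinct same-state entities. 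Either way no such $e''$ exists.

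The main obstacle I anticipate is making the spatial argument in the $e'' \neq e$ case fully rigorous: one must be careful that ``child was part of the parent'' (the strict containment in conditions $1$–$2$) genuinely precludes a separate same-pivot entity, and this relies on the precise definition of $\mathit{pivot}$ as $(\min co_x^+, \max co_y^+)$ together with connectedness. A cleaner route, which I would adopt if the extremal-coordinate bookkeeping gets delicate, is to observe that the whole difficulty hinges on pivots: the Causality condition $3$ guarantees $\mathit{pivot}(Z_e) \neq \mathit{pivot}(Z_{e'})$, whereas any recognition witness $e''$ must carry $\mathit{pivot}(Z_{e'})$ as its own pivot; since the child's pivot is, by the strict containment, a new location not occupied as a pivot in the parent's state, no same-state entity can bear it, closing the argument without fine-grained geometry.
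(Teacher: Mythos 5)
Your proposal is correct and takes essentially the same route as the paper's own proof: condition $4$ of $C$ discharges the temporal conjunct, and a hypothetical recognition witness $e''$ is ruled out because $\mathbf{R_{\delta_{mut}}}(e'') = e'$ forces $\mathit{pivot}(Z_{e''}) = \mathit{pivot}(Z_{e'})$, which the strict containments in conditions $1$--$2$, condition $3$, and the same-state pivot-disjointness established in the previous lemma jointly make impossible. Your explicit case split between $e'' = e$ (directly contradicting condition $3$) and $e'' \neq e$ is in fact slightly more careful than the paper, which only treats the latter case, but the underlying argument is identical.
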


\begin{proof}
Condition $F(e') = F(e) + 1$ insures that $e$ and $e'$ are not
observed in the same state. To establish that $e'$ is not the
result of mutations in some other entity $e''$ observed in past
(i.e., $[F(e'') = F(e)] \wedge [\mathbf{Rec}(e'') = e']$) we note
that because of the definition of $\mathbf{Rec}$, $e''$ and $e'$
would otherwise have the same pivots, which means pivot of $e''$
will be included in the set of cells in $e$ (since $[co_x^+(Z_e)
\supset co_x^+(Z_{e'})] \wedge [co_y^+(Z_e) \supset
co_y^+(Z_{e'})]$), which is not possible because $e$ and $e''$
being different entities in the same state cannot have cells in
common including pivot as argued above in the proof of previous
lemma.
\end{proof}
%

\begin{figure}[hbtp]
\begin{center}
\includegraphics[scale=0.8,trim=0 0 0 0,clip=]{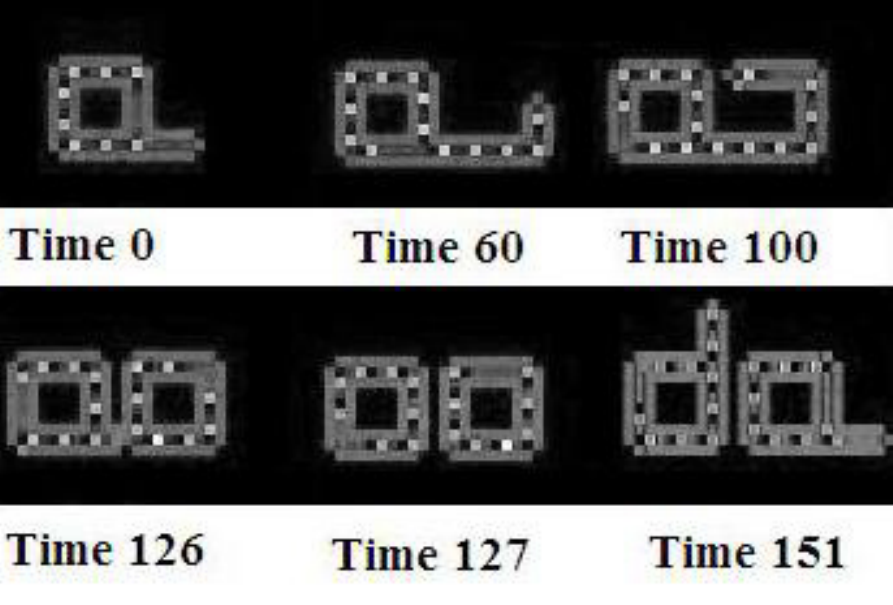}
\caption[Self-replication in Langton loops]{Self-Reproduction in
Langton loops; screen shots from~\cite{Sayama2005}} \label{ftr1}
\end{center}
\end{figure}

\begin{lemma} \textbf{Axiom of Reproduction} and the
\textbf{Axiom of Fecundity} are satisfied by the entities and
abstractions on Langton Loops described above. \end{lemma}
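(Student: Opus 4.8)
The plan is to treat both axioms as existential, structural claims about the single concrete run $T$ fixed above (the one whose initial configuration is shown in Figure~\ref{ftr1} at Time 0), and to discharge them by combining the three preceding lemmas with the well-documented replication dynamics of Langton's loop. The Axiom of Reproduction requires only one witnessing parent--child pair, whereas the Axiom of Fecundity requires an infinite family of descendant generations meeting the size inequality; the former is the easier and I would establish it first.

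For the Axiom of Reproduction I would exhibit the detachment event depicted at time step $127$. Let $e = [Z_e, pivot(Z_e)]$ be the parent loop together with its fully extruded arm observed at that step, and let $e' = [Z_{e'}, pivot(Z_{e'})]$ be the newly pinched-off child loop observed one step later. I would verify the four clauses of the causal relation $C$ directly against the geometry: clauses~$1$ and $2$ hold because, just before detachment, the arm that becomes the child lies strictly inside the bounding extent of the parent-plus-arm structure, so $co_x^+(Z_e) \supset co_x^+(Z_{e'})$ and $co_y^+(Z_e) \supset co_y^+(Z_{e'})$; clause~$3$ holds since the child occupies a different region and hence a different pivot; and clause~$4$ is immediate from $F(e') = F(e)+1$. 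Since $C$ satisfies the Causality Axiom by the lemma already proved, $(e,e')$ is a legitimate causal edge. To place the pair in $\Delta$ I would use $\delta_{rep\_mut} = [0,1]$: because Langton loops replicate \emph{exactly}, the child is geometrically identical to the parent, giving $D(e,e')[1] = d_g = 0 \preceq 0 = \delta_{rep\_mut}[1]$, while the pivot component satisfies $D(e,e')[2] = d_p \preceq 1 = \delta_{rep\_mut}[2]$ regardless of its value. Hence $(e,e') \in (\,(C \cup \mathbf{R_{\delta_{mut}}})^+ \cap \Delta\,) \subseteq \mathbf{AncestorOf}$, and as this pair admits no intermediate ancestor it lies in $\mathbf{Parent_{\Delta}}$, which is therefore non-empty.

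For the Axiom of Fecundity I would build the generations by descent, starting from $G_1 = \{e_0\}$ with $e_0$ the initial loop and taking each $G_{i+1}$ to be the $\mathbf{AncestorOf}$-descendants of $G_i$. The key dynamical fact is that on the infinite CA lattice every loop lying on the growing frontier of the colony produces at least one offspring before it is surrounded and quenched, and the frontier never exhausts because unoccupied quiescent space is always available. Consequently each descendant generation contains the progeny of the previous one and never shrinks, giving $|G_j| \geq |G_i|$ for a suitable $j > i$; since replication continues without bound, infinitely many such generations exist.

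The main obstacle I expect is geometric bookkeeping rather than conceptual difficulty: precisely certifying the strict-containment clauses $1$ and $2$ of $C$ at the exact step of detachment, and, for Fecundity, arguing that descendant counts stay non-decreasing even though interior loops eventually die. The latter I would resolve by noting that $\mathbf{AncestorOf}$ records every entity ever produced in a lineage, so quenched loops remain counted as descendants while the expanding active frontier keeps adding new ones; thus loop death never forces a perpetual decline. Both points ultimately rest on an appeal to the documented step-by-step behaviour of the loop in Figure~\ref{ftr1}, which I would cite rather than re-derive from the transition rules.
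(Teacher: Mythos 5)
Your Fecundity argument is broadly consonant with the paper's (the paper simply appeals to Figures~\ref{ftr1} and~\ref{ftr2} and to repeated application of $\mathbf{R_{\delta_{mut}}}$ and $C$), but your Reproduction argument has a genuine gap at the point where you place the pair $(e,e')$ in $\Delta$. You take $e$ to be the parent-plus-arm at the split and $e'$ the freshly pinched-off child one step later, and you claim $d_g(e,e')=0$ ``because Langton loops replicate exactly.'' This is false at that instant, and in fact contradicts the containment clauses you verify two sentences earlier: clauses~$1$ and~$2$ of $C$ require $co_x^+(Z_e) \supset co_x^+(Z_{e'})$ and $co_y^+(Z_e) \supset co_y^+(Z_{e'})$, so $Z_e$ strictly contains the child's footprint and has more cells; hence $d_g(e,e')=1$, and since $\delta_{rep\_mut}=[0,1]$ demands $d_g=0$ (identical geometrical structure), the immediate causal pair is \emph{not} in $\Delta$, so your claimed membership $(e,e')\in\bigl((C\cup\mathbf{R_{\delta_{mut}}})^+\cap\Delta\bigr)$ fails. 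Exact replication holds only after the child has finished re-forming, not at detachment.

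The missing idea is precisely what the inner transitive closure was designed for: epigenetic maturation. In the paper's proof, the parent is tracked by $\mathbf{R_{\delta_{mut}}}$ through steps $1$--$126$ (pivot fixed), the single causal edge is taken from the parent at Time~$126$ to the detached child at Time~$127$ (the parent at $126$ contains all the child's cells), and then \emph{both} loops are tracked by recognition through steps $128$--$151$, at which point the child has completed its development and is geometrically identical to the original parent. Only then is $\Delta$ applied --- between the parent entity at Time~$0$ and the child at Time~$151$ --- with $(C\cup\mathbf{R_{\delta_{mut}}})^+$ supplying the connecting chain; the outer closure then yields membership in $\mathbf{AncestorOf}$, whence $\mathbf{Parent_{\Delta}}\neq\emptyset$. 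To repair your proof, replace the immediate-$\Delta$ step with this wait-until-Time-$151$ argument. Your resolution of the loop-death issue in Fecundity, via the observation that $\mathbf{AncestorOf}$ counts all descendants produced along the expanding frontier, is fine and in the spirit of the paper's appeal to Figure~\ref{ftr2}.
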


\begin{proof}
These two axioms can be established by the observer in a specific
state sequence as exemplified in Figure~\ref{ftr1} and
Figure~\ref{ftr2} by repeatedly applying the recognition relation
$\mathbf{Rec}$ when entities are changing in number and states of
cells (retaining the pivots) and applying the causal relation when
a parent entity splits (e.g. at Time=$127$). The relation $\Delta$
connects the initial parent entity and the child entity at
Time=$151$.

With respect to Figure~\ref{ftr1},  an entity is identified at
Time=$0$ with associated pivot. Between time steps $[1\ldots126]$
 entity changes in number and states of its cells but the pivot
remains the same, hence as per the definition of $\mathbf{Rec}$,
the observer can recognize the entity in these successive states.
At Time=$127$, the (parent) entity is observed to be splitting
into two identical copies. One of these is again recognized as the
original parent entity because of its pivot and the second entity
would be claimed to be causally related with the parent entity as
per the definition of $C$. To see this, notice that the parent
entity at Time=$126$ contains all the cells of the child entity
appearing at Time=$127$, which satisfies the definition of $C$.
Between time steps $128$ and $151$ both parent and child entities
undergo changes in the number and states of their cells but their
pivots remain fixed. Hence they can again be recognized. Finally
at Time=$151$ the child entity becomes identical to the original
parent entity, therefore the parent entity at Time=$0$ and the
child entity at Time=$151$ are related using $\Delta$. The
transitive closure finally give us the final descendence
relationship between the parent and the child entity.
\end{proof}

\begin{figure}[hbtp]
\begin{center}
\includegraphics[scale=0.7,trim=0 0 0 0,clip=]{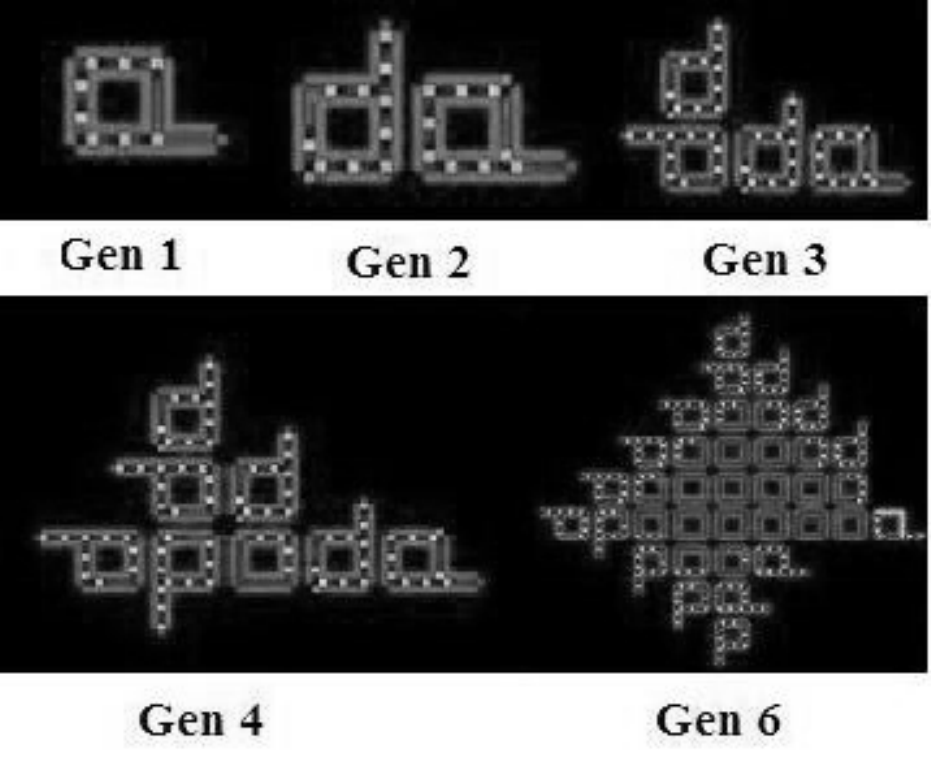}
\caption[Fecundity in Langton loop population]{Fecundity across
generation in a population of Self Replicating Langton Loops;
screen shots from~\cite{Sayama2005}} \label{ftr2}
\end{center}
\end{figure}

\subsubsection*{Mutations, Inheritance, and Natural Selection}

Primary focus of Langton while defining the CA based replicating
loop model was to demonstrate that genotype - phenotype based
coding decoding scheme can be captured in CA universe as well
\cite{Langton87}. And we have seen that this can be observed by
the observer as defined above. Nonetheless, Langton loops do not
exhibit mutations and indeed if we analyze the underlying state
transitions defined for the cells in the model, it becomes clear
that the transition behavior required for the reproduction changes
immediately if any changes are introduced in an entity and
resulting entity is no longer capable of reproduction or in other
terms, none of the mutations in existing replicating loops
preserve reproduction and in terms of the current framework
\emph{Axiom} of \emph{Preservation of Reproduction under
Mutations} is not valid. Because of the enormity of possible
configurations and transition dynamics it is not easy to analyze
which kind of replicating loops can ever withstand certain
mutations and can preserve replicating functionality. Heredity of
course is worth considering only when entities mutate and continue
reproduction. Thus with existing Langton loops, an observer cannot
observe heredity and subsequent natural selection.

The extension of Langton loops defined by Sayama as
\emph{Evoloops} in~\cite{Samaya98b} is one such attempt, where not
all the loops in the model are of the same type with respect to
the number and geometrical arrangement of cells and final
population witnesses (small) variety of different kinds (in size)
of reproducing loops scattered on the lattice forming colonies.
The Evoloops and their evolution can be formulated in the
framework by suitably modifying the definition of the distance
measure $D$ to measure the differences between the entities in the
number and geometric arrangement of cells and by changing limit
$\delta_{rep\_mut}$ such that the observer is able to establish
descendence relationship even when the parent and the child
entities (loops) are not identical. Since evoloops of different
types replicate at different rates, where rate of replication is
measured in terms of number of state transitions, we can infer
that the loops satisfy the axiom of sorting. Indeed in a weak
sense with available simulation results it appears that evoloops
can be observed demonstrating heredity as well as selection.

\subsubsection*{Conclusion}

We have seen that we can formally define an observation process on
the CA universe which discovers the self replication of so called
Langton loops during the simulation of model. The specific
observer presented here follows the intuition that Langton
implicitly stated when describing the loops. We also noted that
mutations, heredity, and selection based axioms are not met in the
model where this limitation can be attributed to underlying
transition rules of the model. Evoloops, which were designed as
extensions of Langton loops with mutations can be seen to be
evolving with variation in the sizes and rates of reproduction.


\subsection{Case Study \bf{2}: Algorithmic Chemistry}
\label{chap:lambda}

Algorithmic Chemistry (AlChemy) was introduced in \cite{ac:Fon92}
and further discussed in
\cite{ac:FB94arrival,ac:FWB94,FB94twice,ac:FB96}. The main focus
of the AlChemy is to study the principles behind the emergence of
biological organizations with the approximate abstraction of real
chemistry as $\lambda$ calculus with finite reductions. Starting
with a random population of $\lambda$ terms (molecules), using
different filtering conditions on reactions, authors describe the
emergence of different kinds of organizations: \emph{Level $0$}
organization consisting of a set of self copying $\lambda$ terms
and hypercycles with mutually copying $\lambda$ terms, \emph{Level
$1$} \emph{self maintaining} organizations consisting of $\lambda$
terms such that every term is effectively produced as a result of
reaction between some other terms in the same organization and
lastly \emph{Level $2$} organization consisting of two or more
Level $1$ sub organizations such that molecules migrate between
these self maintaining sub-organizations. They also provide
detailed algebraic characterization of Level $1$ and Level $2$
organizations without referring to the underlying syntactical
structure of the $\lambda$ terms (molecules) or the micro dynamics
(reduction semantics and filtering conditions) governing the
output of reactions.

\subsubsection*{Instantiating the Framework}

In view of the proposed observer based framework, characterization
of self replicating molecules and hypercycles consisting of
mutually copying molecules is achieved by defining an observation
process, which focusses on individual $\lambda$ terms as entities
and identifies hypercycles as a set of individually replicating
$\lambda$ terms in a sequence of reaction steps (reflexive
autocatalysis).

Since Level $1$ and Level $2$ organizations emerge only when self
copying reactions are filtered out (i.e., self reproduction is not
allowed) to ensure that Level $0$ organizational structures do not
become the fixed points, these cannot be analyzed under the
current framework design because we only consider reproduction,
mutation, inheritance, and selection based evolution and emergence
of organizations.

\subsubsection*{The Chemistry Structure}

A chemical soup of AlChemy consisting of $\lambda$ terms as
molecules is usually initialized with a population of large number
of randomly generated $\lambda$ terms. A \emph{state} of the
chemistry could, therefore, be considered as the collection of all
these $\lambda$ terms (with multiplicity). Since every non elastic
reaction results into introduction of output $\lambda$ term into
the soup and possible removal of some other randomly chosen terms,
it is natural to consider such succession of states after every
reaction step as a state sequence $T \in \mathcal{T}$.

The components of the observation process defined next are based
upon the assumption that it is possible to observe the inputs
terms for a reaction (collision), resultant output term to be
added to the soup, and the randomly deleted terms from the soup,
without knowing the actual reaction details or the reduction
semantics.

\subsubsection*{Entities}

For a given state of the chemistry, let the observation process
identify each $\lambda$ term as a separate entity associating an
unique integer tag with it. Each such entity is represented as
$[w, i]$ where $i$ is the tag uniquely associated with $\lambda$
term $w$. $E$ is the set of all such entities in the chemistry.



{\sf Tagging}: Suitable tagging mechanism needs to be defined by
the observer to recognize whether two $\lambda$ terms in
successive states are the same and to distinguish between multiple
syntactically identical copies of a $\lambda$ term in the soup at
any state. We can associate tags of the form $\lan i_{size},
i_{lex}, i_{mul}\ran$ ($i_{size}, i_{lex}, i_{mul} \in N$) with
the individual molecules in the following way: for the initial
population of $\lambda$ terms, they are arranged with respect to
their sizes and we assign the size of these terms as the first
component in their tags ($i_{size}$) and for terms of same size
arrange them lexicographically and assign in increasing order
second component of their tags ($i_{lex}$) such that multiple
copies of a term have the same first two components of their tags
and then assign increasing integers to each of these as their
third component of the tag ($i_{mul})$. Under such tagging scheme
a small population of $\lambda$ terms $\{\lambda x. x, \lambda x.
x, \lambda x_1. \lambda x_2.x_2\}$ defines the state - $\{[\lambda
x.x, \lan 3, 1, 1\ran], [\lambda x. x, \lan 3, 1, 2\ran], [\lambda
x_1. \lambda x_2. x_2, \lan 5, 1, 1\ran]\}$. For a given tag $tg =
\lan i, j, k \ran$ its components are accessed as $i = tg[1], j =
tg[2]$, and $k = tg[3]$.

Next we discuss the mechanism for \emph{updating} these tags after
reaction and elimination steps. 
We increment by one the third component of the tags for each
entity , which was not deleted from the soup from previous state
and give new unique tag to the new terms added to the soup with
respect to their position in the list of terms based on their size
and lexicographic order such that third component of the newly
added terms is always given value $1$. This numbering scheme
reliably maintains the recognition of terms across states of the
chemistry.

\subsubsection*{Distance Measure}

Distance function $D: E \times E \rightarrow \{0, 1\} \times \{0,
1\}$ is defined such that $\forall (e=[w, t_g], e'=[w', t'_g]) \in
E\ .\ D(e, e')[1] = 0$ if $w$ and $w'$ are the same with respect
to $\alpha$ renaming implying that entity $e'$ is the same entity
$e$ in the previous state; otherwise $D(e, e')[1] = 1$. $D(e,
e')[2] = 0$ if $t'_g[3] - t_g[3] = 1$ indicating that entity $e$
is observed in the next state as entity $e'$, otherwise $D(e,
e')[2] = 1$. The distance function $D$ has been defined keeping in
mind the use of these distances in defining recognition relation
later.

\subsubsection*{The Limits on Observable Mutations}

Let $\delta_{mut} = [0, 0]$, indicating that syntactically
different $\lambda$ terms (under $\alpha$ renaming) are treated as
different entities. Also let $\delta_{rep\_mut} = [0, 1]$
indicating that reproductive mutations resulting into
syntactically different term are not observable. This is primarily
because under $\beta$ reduction semantics of Alchemy, even changes
in the syntactical representations result into very different
reaction behaviors.

\subsubsection*{Observing Self Replicating Hypercycles}

\begin{figure}
\centering
\includegraphics[scale=0.6,trim=0 0 0 0,clip=]{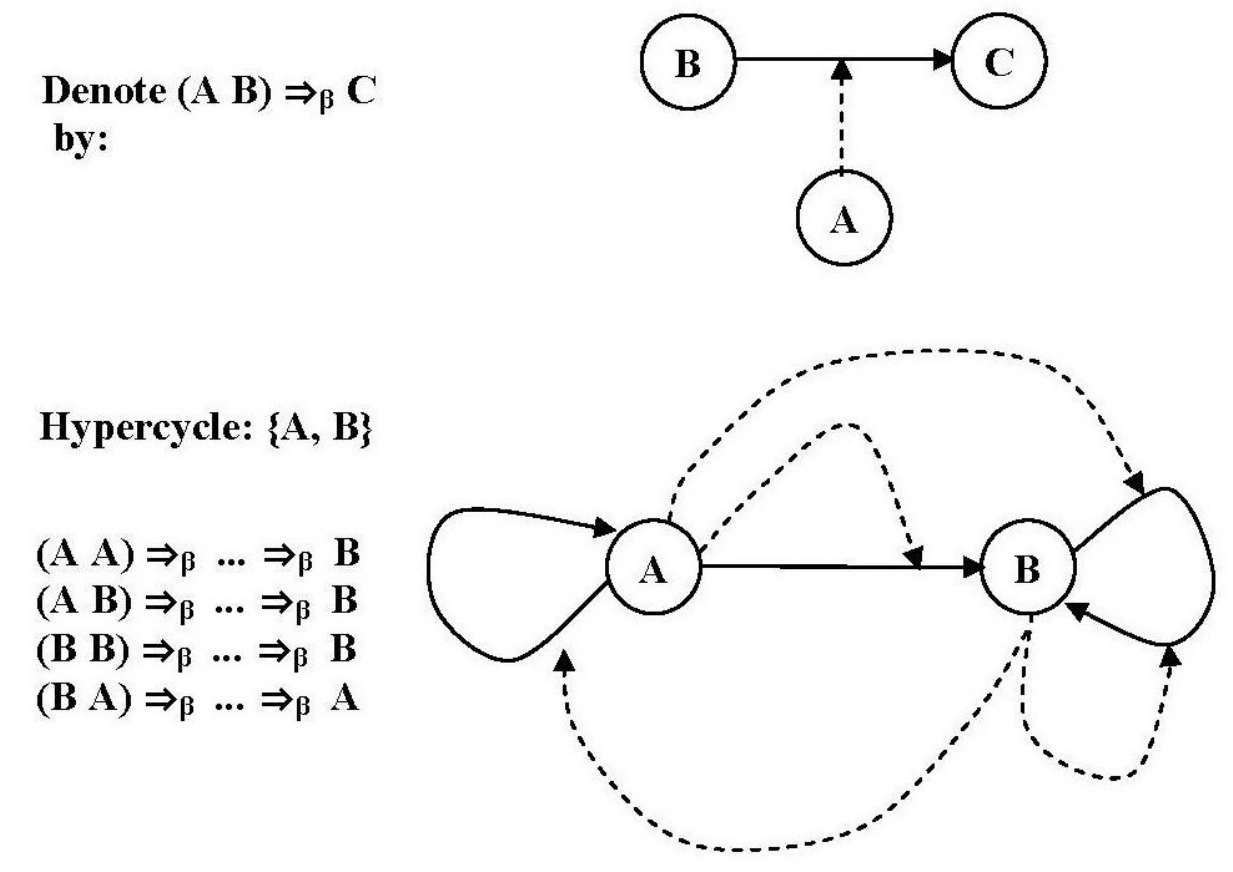}
\caption[An example of self replicating elementary hypercycle in
AlChemy]{Example of self replicating elementary hypercycle
organization in AlChemy from \cite{ac:FB94arrival}. $A = \lambda
x_1. \lambda x_2. x_2$ and $B = \lambda x. x$ . $(A B)
\Rightarrow_\beta C$ represents reaction between $A$ and $B$ by
applying $A$ on $B$ yielding $C$ under $\beta$ reduction.}
\label{fig:lambda_figure_aj}
\end{figure}


We can observe the self-replicating elementary hypercycles as sets
of self-replicating entities. Let us define, for that purpose, the
recognition relation $\mathbf{R_{\delta_{mut}}}$ $: E \rightarrow
E$ as follows: $\forall e, e' \in E$, $\mathbf{R_{\delta_{mut}}}$
$(e) = e' \Leftrightarrow [F(e') = F(e) + 1] \wedge D(e, e') \leq
\delta_{mut}$. Informally this means two entities in consecutive
states are recognized same only using their tags.\\

\begin{lemma} $\mathbf{R_{\delta_{mut}}}$ satisfies Axiom $1$, Axiom
$2$, and Axiom $3$. \end{lemma}

\begin{proof} Axiom $1$ and Axiom $3$ are satisfied by definition.
Axiom $2$, which states that $\mathbf{R_{\delta_{mut}}}$ is an
injective function holds because of the specific construct of
tagging mechanism and the definition of Distance function $D$
which is such that two entities in successive states are
recognized as same only when the difference between their third
components of tags is $1$, and we know that the observer selects
new tags in such a way that this difference is $1$ only when same
entity was present in the previous state.
\end{proof}

Next let us defines $\Delta \subseteq E \times E$ such that
$\forall e, e' \in E . (e, e') \in \Delta \Leftrightarrow D(e, e')
\leq \delta_{rep\_mut}$. In order to define causal relation
between entities in the AlChemy, we assume that observer has the
knowledge of the reacting entities and the output term at any
state. Therefore if entities $e_1$ and $e_2$ react in some state
and yield $e_o$, the observer defines causal relation $C$ so that
$(e_1, e_o) \in C$ and $(e_2, e_o) \in C$ with $F(e_1) = F(e_2) =
F(e_o) -1$.\\

\begin{lemma} Causal relation $C$ defined above satisfies Axiom
$4$. \end{lemma}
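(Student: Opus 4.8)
The plan is to verify the two conjuncts of the Causality Axiom (Axiom~$4$) directly from the definition of $C$ and the tagging scheme, mirroring the structure of the corresponding causality proof for the Langton loops — the role played there by the \emph{pivot} is now played by the freshly assigned tag of a reaction product. Fix $(e, e') \in C$. By construction $e$ is one of the two reactants $e_1, e_2$ of some collision and $e' = e_o$ is the resulting output term, with $F(e_1) = F(e_2) = F(e_o) - 1$. The first conjunct, $F(e') = F(e) + 1$, is then immediate: since $F(e) = F(e_o) - 1$ we have $F(e') = F(e_o) = F(e) + 1$, so reactant and product are never observed in the same state.

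The substance of the lemma is the second conjunct, namely that no entity $e''$ sharing the parent's state (with $F(e'') = F(e)$) is recognised as the product, i.e. $\mathbf{R_{\delta_{mut}}}(e'') \neq e_o$ for every such $e''$. I would argue this from the tagging mechanism exactly as the preceding recognition lemma used it for injectivity. Suppose for contradiction that $\mathbf{R_{\delta_{mut}}}(e'') = e_o$. Since $\delta_{mut} = [0, 0]$, the definition of $\mathbf{R_{\delta_{mut}}}$ forces $D(e'', e_o) \leq [0, 0]$, and in particular $D(e'', e_o)[2] = 0$, which by the definition of $D$ means $t^o_g[3] - t''_g[3] = 1$, where $t''_g$ and $t^o_g$ denote the tags of $e''$ and $e_o$. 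But $e_o$ is a term freshly added to the soup by the reaction, and the observer's numbering scheme assigns every newly added term a tag whose third (multiplicity) component equals $1$; hence $t^o_g[3] = 1$ and we would need $t''_g[3] = 0$. This is impossible, since by construction the third components of all tags are positive integers (initialised at $1$ and only ever incremented for surviving terms). The required non-existence follows.

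The step I expect to carry the whole argument is the reset-to-one property of the third tag component on reaction products: it is precisely this design choice that lets the observer separate genuine production (causality) from mere persistence-with-mutation (recognition), and the proof is little more than unwinding that property against the definitions of $D$ and $\mathbf{R_{\delta_{mut}}}$. The one place where care is needed — and the main (if minor) obstacle — is the possibility that the output $e_o$ is $\alpha$-equivalent to a term already present in the parent state. Such a coincidence makes $D(\cdot, e_o)[1] = 0$ for that term and might suggest a recognition preimage, but it leaves the tag-difference argument untouched: a persisting copy advances its third component by exactly one, whereas $e_o$ starts afresh at $1$, so the equation $t^o_g[3] - t''_g[3] = 1$ still cannot be met. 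Hence no state-mate of the parent can be recognised as $e_o$, and both conjuncts of the Causality Axiom hold.
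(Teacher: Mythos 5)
Your proof is correct and takes essentially the same route as the paper's: the first conjunct follows by construction since $F(e_o) = F(e_1) + 1 = F(e_2) + 1$, and the second is excluded by the tagging mechanism, because the freshly added product has third tag component $1$ while recognition under $\delta_{mut} = [0,0]$ would require a state-mate of the parent with third component $0$, which the scheme never assigns. Your explicit treatment of the case where $e_o$ is $\alpha$-equivalent to a term already in the soup merely spells out what the paper's appeal to the distance function leaves implicit, and is a welcome clarification rather than a departure.
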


\begin{proof}
First condition of Axiom $4$ is satisfied by definition since
$F(e_o) = F(e_1) + 1 = F(e_2) + 1$.  The second condition $[\not
\exists e' \in E. F(e_1) = F(e') \wedge \mathbf{R_{\delta_{mut}}}$
$(e') = e_o]$, that is, there does not exist any third entity $e'$
in the previous state, which has mutated into $e_o$, again follows
from the specific construct of tagging as well as the distance
function because as per the tagging mechanism explained before
$e_o$ being newly added entity in the chemistry will have the
$3^{rd}$ component of its tag as $1$ and all previously present
entities, including $e_1, e_2$, in the chemistry would have their
tags in new states updated such that their $3^{rd}$ components are
always greater than $1$.
\end{proof}

Relations $\mathbf{AncestorOf}$ and $\mathbf{Parent}$ can be
defined same as in the framework.\\

\begin{lemma} \textbf{Axiom of Reproduction} and the
\textbf{Axiom of Fecundity} are satisfied by the entities and
corresponding abstractions discussed above.
\end{lemma}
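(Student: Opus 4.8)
The plan is to exhibit, within the AlChemy setting just instantiated, a concrete state sequence $T \in \mathcal{T}$ in which self-replicating elementary hypercycles are observed, and to verify the two axioms on it. I would anchor the argument on the example of Figure~\ref{fig:lambda_figure_aj}, where $A = \lambda x_1. \lambda x_2. x_2$ and $B = \lambda x. x$ form a mutually copying pair. The key observation is that the reproductive machinery has already been reduced to relational bookkeeping by the preceding lemmas: causal relation $C$ satisfies Axiom $4$, and $\mathbf{R_{\delta_{mut}}}$ satisfies Axioms $1$--$3$. Hence, once I show that at least one pair $(e, e') \in \mathbf{AncestorOf}$ with $e$ the immediate parent of $e'$, the \textbf{Axiom of Reproduction} follows directly from the definition of $\mathbf{Parent_{\Delta}}$ being nonempty.

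First I would establish reproduction. Consider a state $S_j$ in which entities $e_A$ (tagging $A$) and $e_B$ (tagging $B$) are both present. Since $(A\, B) \Rightarrow_\beta C$ with the output term being $\alpha$-equivalent to $A$ (this is the self-replicating property of the hypercycle), the observer records $(e_A, e_o), (e_B, e_o) \in C$ where $e_o$ is the freshly-added copy in $S_{j+1}$. Because $\delta_{rep\_mut} = [0, 1]$ and the output is syntactically identical to $A$ under $\alpha$-renaming, we have $D(e_A, e_o)[1] = 0 \preceq 0$, so $(e_A, e_o) \in \Delta$. Then $(e_A, e_o) \in (C \cup \mathbf{R_{\delta_{mut}}})^+ \cap \Delta$, hence $(e_A, e_o) \in \mathbf{AncestorOf}$, and choosing $e_o$ as an immediate child gives $\mathbf{Parent_{\Delta}} \neq \emptyset$, discharging the \textbf{Axiom of Reproduction}.

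Next I would establish fecundity. Here the reflexive autocatalytic cycle is essential: in each reaction step $A$ and $B$ cooperatively regenerate a copy of $A$ (and symmetrically $B$ regenerates via the dual reaction), so the pair sustains itself indefinitely under the elementary hypercycle dynamics. I would construct the infinite generation sequence $G_1, G_2, \ldots$ by letting $G_i$ be the copies of $A$ (and $B$) present after the $i$-th successful replication event, and show $G_{i+1} = \{c \mid \exists a \in G_i\ .\ (a, c) \in \mathbf{AncestorOf}\}$ has $|G_{i+1}| \geq |G_i|$. Since the hypercycle is self-maintaining and non-elastic reactions add the output while deletions are random rather than targeted, the population of the replicating pair does not perpetually decline, which is exactly the inequality demanded by the \textbf{Axiom of Fecundity}.

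The main obstacle will be the fecundity argument rather than the reproduction one: reproduction is an existential, single-witness claim that falls out immediately from the relational definitions, whereas fecundity requires reasoning about the long-run population behaviour of the replicating entities against the random deletion that keeps the soup at constant size. I would have to argue carefully that the carrying-capacity constraint permits a non-declining descendent sequence --- appealing to the self-maintaining character of the Level $0$ hypercycle organization (every member is effectively reproduced by reactions within the cycle) to guarantee that for each generation some later generation of equal or greater size exists, which is precisely the cyclic-or-steady condition the axiom was designed to admit.
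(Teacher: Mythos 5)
Your overall route is the paper's: witness the \textbf{Axiom of Reproduction} with the hypercycle of Figure~\ref{fig:lambda_figure_aj} (plus the self-replicating terms), read $\mathbf{Parent_{\Delta}} \neq \emptyset$ off the relations $C$, $\Delta$, $\mathbf{R_{\delta_{mut}}}$ already certified by the preceding lemmas, and then argue population maintenance for fecundity. But your key step contains a concrete $\beta$-reduction error: for $A = \lambda x_1. \lambda x_2. x_2$ and $B = \lambda x. x$, the reaction $(A\,B)$ reduces to $\lambda x_2. x_2$, which is $\alpha$-equivalent to $B$, not to $A$ --- $A$ is the select-second combinator, so $A\,M \Rightarrow_\beta B$ for any $M$, and dually $B\,M \Rightarrow_\beta M$. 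Hence $D(e_A, e_o)[1] = 1$, your claimed pair $(e_A, e_o)$ is \emph{not} in $\Delta$, and the witness fails as written. The defining property of the hypercycle is \emph{mutual} copying, not one-step self-copying: each reaction reproduces exactly one of its two inputs. The repair is immediate and leads you back to the paper: take $(e_B, e_o) \in C \cap \Delta$ from the same reaction (one-step reproduction of $B$), or the dual reaction $(B\,A) \Rightarrow_\beta A$ for $A$; for an instance of $A$ to be its own ancestor you need the two-step causal chain through a copy of $B$, intersected with $\Delta$ --- which is precisely the ``multi-step reproduction using transitive closure'' that the paper's proof invokes when it says instances of $A$ and $B$ are causally related to past instances of each other ``and therefore of themselves.'' Alternatively, the paper's genuine one-step self-copier $\lambda x. (x)(x)$, with $(\lambda x. (x)(x))(\lambda x. (x)(x)) \Rightarrow_\beta \lambda x. (x)(x)$, sidesteps the hypercycle bookkeeping entirely.

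On fecundity you are in fact more explicit than the paper, whose proof disposes of both axioms in a few lines and never confronts the dynamics: it leaves the construction of the infinite generation sequence implicit in the causal self-ancestry of the hypercycle members. Your generations $G_i$ indexed by replication events, together with the appeal to the self-maintaining character of the Level $0$ organization, have the right shape for the axiom's cyclic-or-steady condition, and your caveat is warranted --- whether the replicating pair survives the random eliminations indefinitely is a stochastic property of the particular run, so the observer can certify the axiom only on observed runs in which the hypercycle persists. Modulo the $A$/$B$ swap above (which also inverts your parenthetical claim that each step regenerates a copy of $A$), your argument is a correct and somewhat more careful elaboration of the paper's sketch rather than a different approach.
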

\begin{proof}
These two axioms depend upon the examples of self replicating
$\lambda$ terms as well as elementary hypercycles. In case of
hypercycles, the observer establishes multi-step reproduction
using transitive closure of causal relation for each of the
entities in the hypercycle. A quite well known example of self
replicating $\lambda$ term is $\lambda x. (x)(x)$ since $(\lambda
x. (x)(x))(\lambda x.(x)(x))\ {\Rightarrow}_{\beta}\ (\lambda x.
(x)(x))(\lambda x. (x)(x))$. Though in case of Alchemy, the level
$0$ organization consists of self-copiers like $\lambda x. x$ and
hypercycles like $\{\lambda x_1. \lambda x_2. x_2, \lambda x. x$\}
as illustrated in Figure 3. As per the definition of causal
relation, entity instances of $\lambda x_1. \lambda x_2. x_2$ and
of $\lambda x. x$ are causally related to past instances of each
other and therefore of themselves.
\end{proof}

\subsubsection*{Mutations, Inheritance, and Natural Selection}

As emphasized in \cite{ac:FB94arrival}, primary goal of AlChemy is
to study alternative pathways in which higher level organizations
(i.e., hypercycles, self maintaining organizations) can emerge
starting with a random set of molecules. Therefore it appears that
there is no explicit notion of mutations present in the chemistry.
To see this notice that every new entity in the population is the
result of reaction between two other entities. Therefore if one
particular observer decides that one of the reacting entities is
mutating into the resulting entity, it is still difficult to
decide which of the two reacting entities should be considered as
mutating into the new one. Even if such a view is adopted, the
observer will observe that if a self-copying entity at any
reaction step mutates into another entity then most often the new
entity can no longer self-copy. Thus \emph{Axiom} $7$
(Preservation of Reproduction under Mutation) would be violated.
Finally as discussed at the beginning of the section, owing to the
focus of our framework on the evolutionary processes,
self-maintaining organization of the kind that arise in AlChemy
are beyond the scope.

\subsubsection*{Conclusion}

Thus we have demonstrated that, based upon the knowledge of
reacting terms and outputs, a precise observation process can be
defined to work with AlChmey, which can be used to discover the
self replicating $\lambda$ terms as well as hypercycles in the
model. We also noted that mutations, heredity, and selection based
axioms are not met in the chemistry where this limitation should
be attributed to underlying reaction semantics
of the chemistry as well as its design. 
This study highlights the fact that not all interesting dynamic
processes are evolutionary in nature and therefore some of these
non evolutionary processes are out of scope of the framework at
present.


\section{Related Work} \label{chap:related}

Because of the presence of sufficiently many biology-specific
criterion (e.g., morphological characters, bio-molecular
structures etc.) to distinguish life from non-life, in biological
literature there is little formal work on recognizing life {\it
per se}. There is, however some recent work on defining and
developing methods to analyze genotype space structure based upon
the macroscopic observations on phenotype characteristics (mainly
morphological and reactive
characteristics)~\cite{Garay98,Gamez03,Lopez04}.

To the authors' knowledge, there is not much work focussing on the
observation process for ALife studies reported in literature.
Though there exist proposals to define `numerical parameters' or
`statistics' \cite{Bedau99} to recognize life in a model. However,
it is not clear whether there can be simple numerical definitions
capturing the essence of life in arbitrary models and even if so
does not seem to be the case with the existing proposals. The
difficulty arises out of intricate nature of reproduction and
selection inevitably involving non trivial identification of the
population of evolving entities. Langton defined
in~\cite{Langton91} a quantitative matric, called \emph{lambda}
parameter to detect life in any generic one dimensional cellular
automata model based upon the characteristics of its transition
rules. This lambda parameter based analysis is based upon the
assumption that any self organizing system can be treated as
living and does not consider population centric evolutionary
behavior as characteristic of life. In \cite{Bedau98} there is a
discussion on the classification of long term adaptive
evolutionary dynamics in natural and artificially evolving
systems. This they achieve by defining activity statistics for the
components, which quantifies the adaptive value of components
(characteristics in our model). They employ similar mechanism as
of ours by associating activity counters (tags) with all the
components present in the system during simulation.


Self-reproduction, which has a long history of research starting
from the late 1950s~\cite{Burks70,Sipper98,Freitas04} has evaded
precise formal definition applicable to a wide range of
models~\cite{ND98} in the sense of observable characterization of
the reproducing entities. Though there is enough work on
mathematical analysis of replication dynamics (fecundity) in
various natural systems or the systems where environmental
constraints governing the rate of reproductions are known (see for
overview~\cite[Chap5]{Freitas04}.) In some of the discussions
related to self-replication in cellular automata
models~\cite{Samaya98b,Morita98}, formalizations of reproducing
structures are presented, but they do not attempt to provide a
general framework for observing reproduction or other components
of evolutionary processes. These attempts at formalizing
reproduction in CA models are reminiscent of our definition of
entities (loops) in Section~\ref{sec:langton_inst}.

In other work~\cite{Misra06}, we proposed a multi-set theoretic
framework to formalize self reproduction (with mutations) in
dynamical hierarchies in terms of hierarchal multi-sets and
corresponding inductively defined meta-reactions. The ``self" in
``self-reproduction" was defined in terms of {\it observed
structural equivalences} between entities. We also introduced
constraints to distinguish a simple ``collection" of reacting
entities from genuine cases of ``emergent" organizational
structures consisting of {\it semantically coupled} multi-set of
entities.


\section{Conclusion} \label{sec:concluding}

\subsection{General Remarks}

This paper formalizes an implicit underlying component of ALife
studies, namely the observation process, by which entities are
identified and their evolution is observed in a particular ALife
simulation. Under the assumption that the essence of life-like
phenomena is their evolutionary behavior, we developed a framework
to formally capture basic components of evolutionary phenomena.
This work, in essence, brings insights from evolutionary theory
for real-life into the realm of artificial-life for defining a
formal framework for observational processes, which are needed for
the identification of life-like phenomena in the ALife studies. We
have argued that without such a formalism, claims pertaining to
the evolutionary behavior in ALife studies will remain
inconclusive.

We formally elaborate in algebraic terms the necessary and
sufficient steps for an observational process, to be employed by
an ALife researcher upon the time progressive model of his model
universe, to uncover (hidden) life-like phenomena in the light of
Darwinian evolution as defining characteristics of life. The
observation process as specified in our framework may be carried
out manually or can be alternatively algorithmically programmed
and integrated within the model.

To define inference process we specify necessary conditions, as
axioms, which must be satisfied by the outcomes of observations
made upon the model universe in order to infer whether life-like
phenomena is present in the model
(Section~\ref{chap:evocomponents}). These axioms also specify the
experimental work necessary in order to observe and lay claims for
the presence of life in the model universe. 

The case studies on Langton loops (Section~\ref{chap:langton}) and
Algorithmic Chemistry (Section~\ref{chap:lambda}) highlight the
contributions that such an approach can make to the discussion of
specific ALife experiments. An important property of such a study
is to make explicit ``multi-level observations", where entities
and their relationship can be observed and defined on separate
organizational levels.

The framework design and the case study analysis also provide us
clues for ALife research designs so that to be better able to
witness evolutionary phenomena in the model during its
simulations. This is discussed next:

\subsection{Design Suggestions for ALife Researchers}
\label{sec:design-suggst}

As the framework is based upon the Darwinistic concepts of
defining life in terms of evolutionary processes, the design
suggestions we describe here are rather more suitable for those
studies which aim to complement real life studies in an
evolutionary framework.

\begin{itemize}
    \item \textbf{Sufficient Reproduction with Variation:} The model
    must be designed such that there exist potentially large set
    of reproducing entities with significant variation in their
    characteristics. Quite often this hinges upon the choice of reaction rules or
    the semantics of the model and indeed it is a serious challenge for any
    model designer to define the reaction semantics which
    permits potentially large set of reproducers with significant variation. Another
    interesting aspect is that these reproducers must be relatively closely
    related to each other under the reaction semantics. This means that
    sufficiently many variations of reproducers should also be reproducers in
    themselves otherwise the axiom of preservation of reproduction
    under mutation will not effectively hold in the model and most of the
    reproducers would have to appear \emph{de novo} during simulations. We
    encounter this problem in both of the case studies
    discussed in Section~\ref{sec:case-studies}. In case of Langton loops, any kind of
    change in the loop structure would cause caseation of
    replication. The work on designing Evoloops is therefore based upon the redefinition of
    the reaction semantics or transition rules which permit variation in replicating loops.
    Similarly in the case of Algorithmic chemistry, almost all of the single replicating $\lambda$
    terms arise \emph{de novo} and their variations do not replicate under $\beta$ reaction
    semantics.
    \item \textbf{Measurable Rates of Reproduction:} The model should be designed
    such that it is possible to impose some valid measure of
    determining the rates of reactions which in turn can be used
    to estimate differences in the rates of reproduction of different entities.
    This measurement of reproductive rates must be independent of the updation
    algorithm which selects entities for reaction. Therefore
    it can be argued that the models, where all (reproductive) reactions
    take place in a single step would be difficult to observe for
    natural selection, which works only when different entities
    reproduce at different rates.  For example, it is not
    possible to infer differences in the rates of reproduction
    among different reproducing elementary hypercycles in the
    Algorithmic Chemistry consisting of
    the same number of $\lambda$ terms because every
    reaction between any two $\lambda$ terms occurs in a single
    step. On the other hand natural selection can be observed in case of
    Evoloops precisely because different types of loops
    consisting of different number of cells reproduce at different
    rates based upon the number of state transitions.
\end{itemize}

\subsection{Limitations}
\label{sec:limit}

The decision to equate life with evolutionary processes also
excludes some of the interesting complex phenomena that are not
evolutionary in nature from the scope of this work. Indeed, we
have shown in Section~\ref{chap:lambda} that the framework cannot
account for the dynamic non-evolutionary behavior of Level $1$ and
Level $2$ organizations emerging in the Algorithmic Chemistry. We
limit our attention to only those observations having evolutionary
significance, though other observations can also be made upon the
model including metabolism~\cite{ac:BFF92}, emergence of
complexity~\cite{ac:adami00}, self organization~\cite{ac:Kau93},
and autonomous and autopoitic nature of life~\cite{Zeleny81} etc.

We have not placed direct emphasis on certain concepts widely
associated with ALife studies including the notion of
``emergence''. In our current setting the notion of ``strong
emergence'' is only implicitly present and indeed ``the element of
surprise'' \cite{Bass97} often associated with emergence is not
immediate in the framework. Similarly ``the element of autonomy''
of emergent processes with respect to the underlying micro-level
dynamics is not addressed in our framework. Indeed, the spirit of
the high level of observations and corresponding abstractions upon
which the framework rests, may preclude such inferences.
Nonetheless the idea of ``weak emergence" \cite{Bedau97}, which
lays emphasis on the simulations of the model for the emergence of
high level macro-states is fundamental to our framework, where the
observation process is by default based upon the simulations of
the model and not on analytical derivations.

Another limitation of the framework in its current state is that
it cannot be used effectively to make predictions regarding the
possible observable evolutionary dynamics in a ALife model during
simulations. This limitation though carries forward from the
nature of Darwinian theory which is too generic in its
conceptualization and based upon random sources of change that
make it difficult to derive useful predictions.

Similarly analysis of G$\ddot{o}$delian type conjunctures to
counter possibility of strong Alife, stating the impossibility of
formalizing life in general because that would imply formalizing
``mathematically intelligent" entities like ourselves, which could
in tern prove the G$\ddot{o}$del theorems in their own
``mathematical universe" having correspondence with ours, is also
beyond the scope of the current limits of the framework. See
\cite{sullins97,Rasmussen92}.
\subsubsection*{Problem of False Positives}
\label{false+ve}

Terms `false positive' and `false negative' are used in general to
highlight the limitations of `observation - inference' based
methodologies. \emph{False positive} refers to a situation where
observations and consequent inferences on a model result into a
claim of the presence of certain property in the model which
actually does not exist, while \emph{false negative} is used to
refer the situation where observations do not yield required
support for the presence of certain property, which is actually
present in the model. False negatives are usually the result of
incomplete observations while false positives indicate
arbitrariness in the observation/inference process.

Like any other generic specification framework, current framework
also suffers from the weakness of administering false positives.
False negatives are also possible, whereby an observation process
is defined such that it does not infer evolution, even though
there might actually be evolution present in the model. The case
of false negatives, however will not concern us since our focus is
to establish the presence of evolution in a given ALife model and
not whether it is absent with respect to certain observations. The
problem of false positives stems due to the fact that the
framework permits arbitrariness in the definition of entities and
their causal relationships. In case of causal relationships, they
are defined in the framework as observation dependent and might
not be consistent with the underlying micro-level dynamics of the
model (Section~\ref{reproformalized}). This arbitrariness might
give rise to false claims on the presence of evolution in the
model though there might be none actually.

For example, an observer (say $ob$) might decide to ``ignore"
entities in some states in the beginning and then choose later on
to observe them in some other states so that to use them for
establishing (false) evolutionary relationships, which would not
have been possible had he not preferred to ignore them earlier.
This problem of selectively observing entities in various states
requires additional constraints in the framework. We may add the
following constraint by considering another observer $ob'$ with
same universe of observation as $ob$. Let us consider a particular
simulation of a model as a state sequence $T$. For a state
subsequence $S$ of $T$, 
let $E_{ob}^S$ and $E_{ob'}^S$ denote the set of entities observed
by $ob$ and $ob'$ respectively. Consider that $ob'$ observes some
entities $\mathcal{X} \subseteq E_{ob'}^S$, which were ignored by
$ob$, that is, $\mathcal{X}\ \not\subseteq E_{ob}^S$. Now consider
the case when $ob$ chooses to observe $\mathcal{X}$ in some later
subsequence $S'$ of $T$, $S \neq S'$, that is, $\mathcal{X}$
$\subseteq E_{ob}^{S'}$, and also $\mathcal{X}$ $\subseteq
E_{ob'}^{S'}$, where $E_{ob}^{S'}$, and $E_{ob'}^{S'}$ are the
sets of entities observed by $ob$ and $ob'$ in $S'$. Now if $ob$
establishes evolutionary relationships using entities in
$\mathcal{X}$, which cannot be established by $ob'$, then we say
that $ob$ has drawn \emph{illegitimate} conclusions.

\subsection{Further work}
\label{sec:further-work}

Framework can be further extended in several interesting
directions, including the following: We need to capture the
essence of \emph{strong emergence} by considering several
observation processes at different organizational levels of the
model. We can also study overlapping evolutionary processes -
examples from real life include co-evolution, and sexual selection
versus environmental selection. Framework ought to be extended so
that fruitful predictions for a given ALife model regarding the
nature of evolutionary dynamics can be made. We also need to
introduce more strict constraints to overcome the problem of false
positives by limiting as to what could be claimed as observed.
Further insights can be gained by applying the framework to novel
classes of ALife models to refine the framework further, which we
are currently involved with.

\bibliographystyle{alpha}
\bibliography{alife}

\newcommand{\etalchar}[1]{$^{#1}$}
\begin{thebibliography}{BMP{\etalchar{+}}00}

\bibitem[AOC00]{ac:adami00}
Christoph Adami, Charles Ofria, and Travis~C Collier.
\newblock Evolution of biological complexity.
\newblock {\em Porceedings of National Academy of Science}, 97:4463--4468,
  2000.

\bibitem[Bac07]{Sayama2005}
Eli Bachmutsky.
\newblock {\em Java applet: Self-Replicating Loops in Cellular Space}.
\newblock Available at \url{http://necsi.org/postdocs/sayama/sdsr/java/},
  06/08/2007.

\bibitem[BE97]{Bass97}
N.~A. Baas and C.~Emmeche.
\newblock On emergence and explanation.
\newblock {\em Intellectica}, 1997/2(25):67--83, 1997.

\bibitem[Bed97]{Bedau97}
M.~A. Bedau.
\newblock Weak emergence.
\newblock In James Tomberlin, editor, {\em Philosophical Perspectives: Mind,
  Causation, and World}, volume~11, pages 375--399. Blackwell Publishers, 1997.

\bibitem[Bed99]{Bedau99}
Marc~A. Bedau.
\newblock Can unrealistic computer models illuminate theoretical biology?
\newblock In {\em Proceedings of the 1999 Genetic and Evolutionary Computation
  Conference Workshop Program}, pages 20--23, Orlando, Florida, 1999.

\bibitem[BFF92]{ac:BFF92}
Richard~J. Bagley, J.~Doyne Farmer, and Walter Fontana.
\newblock Evolution of a metabolism.
\newblock In Christopher~G. Langton, Charles Taylor, J.~Doyne Farmer, and Steen
  Rasmussen, editors, {\em Artificial Life II}, pages 141--158, Redwood City,
  CA, 1992. Addison-Wesley.

\bibitem[BMP{\etalchar{+}}00]{ac:BMPRAGIKR00}
M.~A. Bedau, J.~S. McCaskill, N.~H. Packard, S.~Rasmussen, C.~Adami, D.~G.
  Green, T.~Ikegami, K.~Kaneko, and T.~S. Ray.
\newblock Open problems in artificial life.
\newblock {\em Artificial Life}, 6(4):363--376, 2000.

\bibitem[BSP98]{Bedau98}
M.~A. Bedau, E.~Snyder, and N.~H. Packard.
\newblock A classification of long-term evolutionary dynamics.
\newblock In C.~Adami, R.~Belew, H.~Kitano, and C.~Taylor, editors, {\em
  Artificial Life VI}, pages 228--237. Cambridge: MIT Press, 1998.

\bibitem[Bur70]{Burks70}
A.W. Burks.
\newblock {\em Essays on Cellular Automata}.
\newblock University of Illinois Press, 1970.

\bibitem[Cod68]{Codd68}
E.F. Codd.
\newblock {\em Cellular Automata}.
\newblock Academic Press, 1968.

\bibitem[Daw82]{Dawkins82}
Richard Dawkins.
\newblock Universal darwinism.
\newblock In D.S.Bendall, editor, {\em Evolution from Molecules to Men}, pages
  403--25. Cambridge University Press, (Cambridge, 1982.

\bibitem[DZB01]{ac:Dittrich01}
P.~Dittrich, Jens Ziegler, and Wolfgang Banzhaf.
\newblock Artificial chemistries---a review.
\newblock {\em Artificial Life}, 7(3), 2001.

\bibitem[FB94a]{ac:FB94arrival}
W.~Fontana and L.~W. Buss.
\newblock '{The} arrival of the fittest': Toward a theory of biological
  organization.
\newblock {\em Bull. Math. Biol.}, 56:1--64, 1994.

\bibitem[FB94b]{FB94twice}
W.~Fontana and L.~W. Buss.
\newblock What would be conserved if `the tape were played twice'?
\newblock {\em Proc. Natl. Acad. Sci. USA}, 91:757--761, 1994.

\bibitem[FB96]{ac:FB96}
W.~Fontana and L.~W. Buss.
\newblock The barrier of objects: From dynamical systems to bounded
  organization.
\newblock In J.~Casti and A.~Karlqvist, editors, {\em Boundaries and Barriers},
  pages 56--116, Redwood City, MA, 1996. Addison-Wesley.

\bibitem[FM04]{Freitas04}
Robert Freitas and Ralph Merkle.
\newblock {\em Kinematic Self-Replicating Machines}.
\newblock Landes Bioscience, 2004.

\bibitem[Fon92]{ac:Fon92}
W.~Fontana.
\newblock Algorithmic chemistry.
\newblock In C.~G. Langton, C.~Taylor, J.~D. Farmer, and S.~Rasmussen, editors,
  {\em Artificial Life II}, pages 159--210, Redwood City, CA, 1992.
  Addison-Wesley.

\bibitem[Fut98]{Fut98}
Douglas~J. Futuyma.
\newblock {\em Evolutionary Biology}.
\newblock Sinauer Associates, 3 edition, 1998.

\bibitem[FWB94]{ac:FWB94}
W.~Fontana, G.~Wagner, and L.~W. Buss.
\newblock Beyond digital naturalism.
\newblock {\em Arficial Life}, 1/2:211--227, 1994.

\bibitem[GCKV03]{Gamez03}
M.~Gamez, R.~Carreno, A.~Kosa, and Z.~Varga.
\newblock Observability in strategic models of viability selection.
\newblock {\em Biosystems}, 71 (3):249--255, 2003.

\bibitem[HM07]{Misra07}
Martin Henz and Janardan Misra.
\newblock Towards a framework for observing artificial life forms.
\newblock In {\em Proceedings of the 2007 IEEE Symposium on Artificial Life
  (IEEE-ALife'07)}, pages 23--30. IEEE Computational Intelligence Society,
  2007.

\bibitem[III97]{sullins97}
John P.~Sullins III.
\newblock Godel's incompleteness theorems and artificial life.
\newblock {\em Society for Philosophy and Technology}, 2(3-4):141--157, 1997.

\bibitem[JG98]{Garay98}
M.~B.~Garay J.~Garay.
\newblock Genetical reachability: when does a sexual population realize all
  phenotypic states.
\newblock {\em J. Math. Biol.}, 37:146--154, 1998.

\bibitem[Kau93]{ac:Kau93}
S.~A. Kauffman.
\newblock {\em The Origins of Order: Self-Organization and Selection in
  Evolution.}
\newblock Oxford University Press, New York, 1993.

\bibitem[Kim83]{Kimura83}
Motoo Kimura.
\newblock {\em The Neutral Theory of Molecular Evolution}.
\newblock Cambridge University Press, 1983.

\bibitem[Lan84]{ac:Lan84}
C.~G. Langton.
\newblock Self-reproduction in cellular automata.
\newblock {\em Physica D}, 10D(1-2):135--44, 1984.

\bibitem[Lan90]{Langton91}
C.~G. Langton.
\newblock Computation at the edge of chaos: Phase transitions and emergent
  computation.
\newblock {\em Physica}, D 42:12--37, 1990.

\bibitem[Lan95]{Langton95}
C.~G. Langton.
\newblock Artificial life: An overview.
\newblock In {\em Complex Adaptive Systems}. MIT Press, Cambridge, MA, 1995.

\bibitem[Lan97]{Langton87}
C.~G. Langton.
\newblock Artificial life.
\newblock In {\em Artificial Life VI: Proceedings of an interdisciplinary
  Workshop on the Synthesis and Simulation of Living Systems}, pages 1--47. SFI
  Studies in the Sciences of Complexity, 1997.

\bibitem[LGC04]{Lopez04}
I.~Lopez., M.~Gamez, and R.~Carreno.
\newblock Observability in dynamic evolutionary models.
\newblock {\em BioSystems}, 73:99--109, 2004.

\bibitem[MB97]{Mah97}
M.~Mahner and M.~Bunge.
\newblock {\em Foundations of Biophilosophy}.
\newblock Springer, 1997.

\bibitem[McM00a]{McMullin00a}
Barry McMullin.
\newblock John von {N}eumann and the evolutionary growth of complexity: Looking
  backward, looking forward.
\newblock {\em Artificial Life}, 6:347--361, 2000.

\bibitem[McM00b]{McMullin00b}
Barry McMullin.
\newblock The von {N}eumann self-reproducing architecture, genetic relativism
  and evolvability.
\newblock In {\em Evolvability Workshop at Artificial Life VII: Proceedings of
  the Seventh International Conference on Artificial Life}, 2000.

\bibitem[Mis06a]{Misra06a}
Janardan Misra.
\newblock Artificial life evolution: Can we observe it or not.
\newblock In {\em The International European Conference on Computing and
  Philosophy (ECAP'06)}. Extended Abstract at
  http://www.anvendtetikk.ntnu.no/ecap06/program/Mishra.pdf, 2006.

\bibitem[Mis06b]{Misra06}
Janardan Misra.
\newblock An inductive formalization of self reproduction in dynamical
  hierarchies.
\newblock In {\em Proceedings of ALIFE X: 10th International Conference on the
  Simulation and Synthesis of Living Systems}, pages 553--558, Bloomington, US,
  2006. MIT Press.

\bibitem[Mor98]{Morita98}
Kenichi Morita.
\newblock Cellular automata and artificial life: Computation and life in
  reversible cellular automata.
\newblock In E.~Goles and S.~Martinzez, editors, {\em Complex Systems}, pages
  151--200. Kluwer Academic Publisher, 1998.

\bibitem[ND98]{ND98}
C.~L. Nehaniv and K.~Dautenhahn.
\newblock Self-replication and reproduction: Considerations and obstacles for
  rigorous definitions.
\newblock {\em Third German Workshop on Artificial Life: Abstracting and
  Synthesizing the Principles of Life}, pages 283--290, 1998.

\bibitem[Neh05]{Nehaniv05}
C.~L. Nehaniv.
\newblock Self-replication, evolvability, and asynchronicity in stochastic
  worlds.
\newblock {\em Proc. 3rd Symposium on Stochastic Algorithms, Foundations and
  Applications}, 3777:(in press), 2005.

\bibitem[Ras92]{Rasmussen92}
S.~Rasmussen.
\newblock Aspects of information, life, reality, and physics.
\newblock {\em Artificial Life II, SFI Studies in the Science of Complexity},
  10:767--773, 1992.

\bibitem[Rid96]{Ridley96}
Mark Ridley.
\newblock {\em Evolution}.
\newblock Blackwell Science, 2 edition, 1996.

\bibitem[Rid97]{Ridley97}
Mark Ridley, editor.
\newblock {\em Evolution}.
\newblock Oxford University Press, 1997.

\bibitem[Say98]{Samaya98b}
Hiroki Sayama.
\newblock {\em Constructing Evolutionary Systems on a Simple Deterministic
  Cellular Automata Space}.
\newblock PhD thesis, Department of Information Science, Graduate School of
  Science, University of Tokyo, December 1998.

\bibitem[Sch01]{ps01}
Peter Schuster.
\newblock Mathematical challenges from molecular evolution.
\newblock In Bj{\"o}rn Enquist and Wilfried Schmid, editors, {\em Mathematics
  Unlimited - 2001 and Beyond}. Springer, 1 edition, 2001.

\bibitem[SH00]{SS00}
Stephen~C. Stearns and Rolf~F. Hoekstra.
\newblock {\em Evolution--An Introduction}.
\newblock Oxford University Press, 2000.

\bibitem[Sip98]{Sipper98}
Moshe Sipper.
\newblock Fifty years of research on self-replication: An overview.
\newblock In {\em Artificial Life IV}, pages 237--257, 1998.

\bibitem[SS97]{ss97}
John~Maynard Smith and Eors Szathmary.
\newblock {\em The Major Transitions in Evolution}.
\newblock Oxford University Press, reprint edition edition, 1997.

\bibitem[vN66]{Neumann66}
J.~von Neumann.
\newblock {\em Theory of Self--Reproducing Automata}.
\newblock University of Illinois Press, 1966.

\bibitem[Zel81]{Zeleny81}
M.~Zeleny, editor.
\newblock {\em Autopoiesis: A Theory of Living Organization}.
\newblock North Holland, New York, 1981.

\end{thebibliography}



\end{document}